\renewcommand\@biblabel[1]{#1.}
\crefname{section}{Sec.}{Secs.}
\Crefname{section}{Section}{Sections}
\crefname{table}{Tab.}{Tabs.}
\Crefname{table}{Table}{Tables}
\newcommand*{\addFileDependency}[1]{
\typeout{(#1)}
%
%
\@addtofilelist{#1}
%
\IfFileExists{#1}{}{\typeout{No file #1.}}
}\makeatother
\titlerunning{DiffCD: A Symmetric Chamfer Distance for Neural Implicit Surfaces}
\author{Linus Härenstam-Nielsen\orcidlink{0000-0001-6863-4438} \quad
Lu Sang\orcidlink{0009-0007-1158-5584} \quad Abhishek Saroha\\[1mm]
Nikita Araslanov\orcidlink{0000-0002-9424-8837} \quad Daniel Cremers\orcidlink{0000-0002-3079-7984}}
\authorrunning{Härenstam-Nielsen et al.}
\institute{TU Munich \\[1mm]
Munich Center for Machine Learning
}
\newsavebox{\tempbox}
\newcommand \eq[1]{\begin{equation}\begin{aligned}#1\end{aligned}\end{equation}}
\newcommand*\diff{\mathop{}\!\mathrm{d}}
\newcommand{\closest}[2]{x_{\text{proj}}(#1, #2)}
\newcommand{\pull}[2]{x_{\text{pull}}(#1, #2)}
\newcommand{\IGRloss}{L_{\text{IGR}}}
\newcommand{\NPloss}{L_{\text{NP}}}
\newcommand{\DiffCDloss}{L_{\text{DiffCD}}}
\newcommand{\surface}{S}
\newcommand{\SSAloss}{L_{\text{SSA}}}
\newcommand{\sampleSSAloss}{\hat{L}_{\text{SSA}}}
\newcommand{\IGRSSAloss}{L_\text{SIREN}}
\newcommand{\Eikonalloss}{L_{\text{eikonal}}}
\newcommand{\chamfer}{\text{CD}_{\rightarrow}}
\newcommand{\symchamfer}{\text{CD}}
\newcommand{\globalsamples}{{\pointcloud_s}}
\newcommand{\globalsample}{{x_s}}
\newcommand{\mesh}{{M}}
\newcommand{\meshInterval}{{K_\text{mesh}}}
\newcommand{\meshSamples}{{\pointcloud_\text{mesh}}}
\newcommand{\trainIter}{{k}}
\newcommand{\scalar}{{\tau}}
\newcommand\rgt{\aftergroup\mathclose\aftergroup{\aftergroup}\right}
\newcommand{\pointcloud}{\mathcal{P}}
\newcommand{\equal}{\!=\!} 
\newcommand{\der}{\nabla}
\newcommand{\grad}{\nabla_x}
\newcommand*{\inparagraph}[1]{\noindent\textbf{#1}\hspace{0.5em}}
\definecolor{tabfirst}{rgb}{1, 0.7, 0.7} 
\definecolor{tabsecond}{rgb}{1, 0.85, 0.7} 
\definecolor{tabthird}{rgb}{1, 1, 0.7} 
\def\neupull{Neural-Pull\xspace}
\def\siren{SIREN\xspace}
\def\ourshort{DiffCD\xspace}
\newcommand{\argdot}{{{}\cdot{}}}
\newcommand\blfootnote[1]{%
  \begingroup
  \renewcommand\thefootnote{}\footnote{#1}%
  \addtocounter{footnote}{-1}%
  \endgroup
}
\begin{document}
\title{DiffCD: A Symmetric Differentiable Chamfer Distance for Neural Implicit Surface Fitting} 

\maketitle

\begin{abstract}
Neural implicit surfaces can be used to recover accurate 3D geometry from imperfect point clouds.
In this work, we show that state-of-the-art techniques work by minimizing an approximation of a one-sided Chamfer distance.
This shape metric is not symmetric, as it only ensures that the point cloud is near the surface but not vice versa.
As a consequence, existing methods can produce inaccurate reconstructions with spurious surfaces.
Although one approach against spurious surfaces has been widely used in the literature, we theoretically and experimentally show that it is equivalent to regularizing the surface area, resulting in over-smoothing.
As a more appealing alternative, we propose \textit{DiffCD}, a novel loss function corresponding to the \textit{symmetric} Chamfer distance.
In contrast to previous work, DiffCD also assures that the surface is near the point cloud, which eliminates spurious surfaces without the need for additional regularization.
We experimentally show that DiffCD reliably recovers a high degree of shape detail, substantially outperforming existing work across varying surface complexity and noise levels.
\blfootnote{Project code is available at \url{https://github.com/linusnie/diffcd}.}
\end{abstract}

\begin{figure}[ht!]%
    \centering
    
    \begin{subfigure}{0.25\columnwidth}
        \includegraphics[width=\columnwidth]{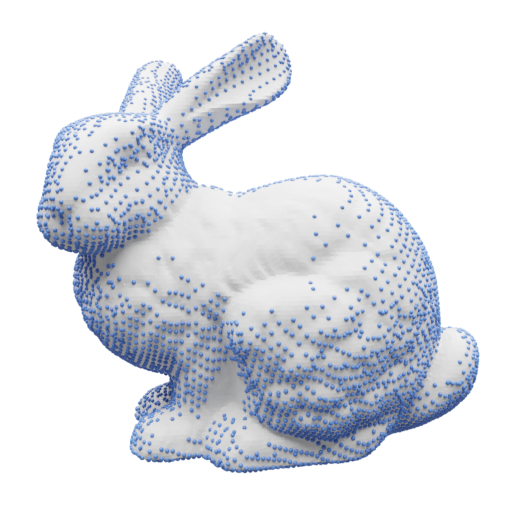}\\[-2mm]
        \includegraphics[width=\columnwidth]{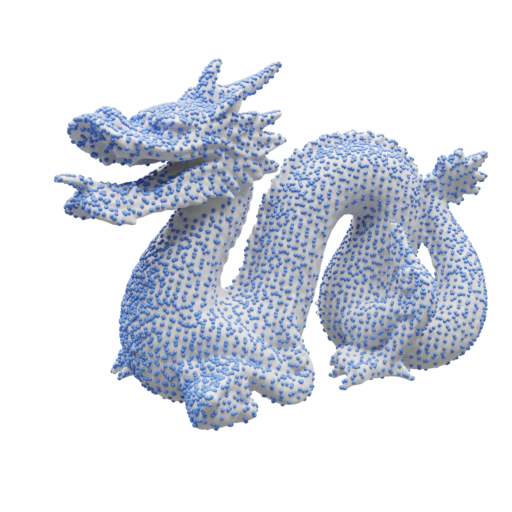}\\[-6mm]
        \includegraphics[width=\columnwidth]{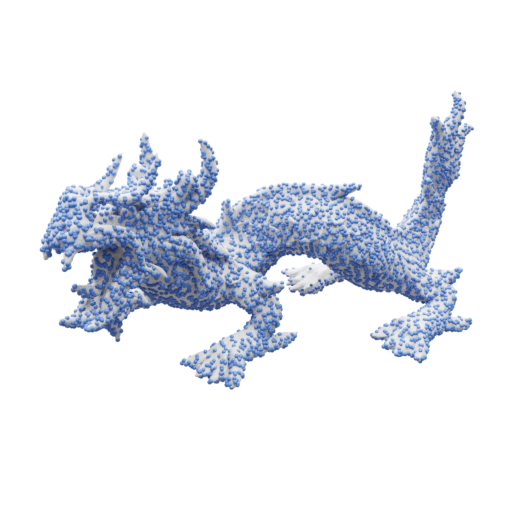}\\[-6mm]
        \includegraphics[width=\columnwidth]{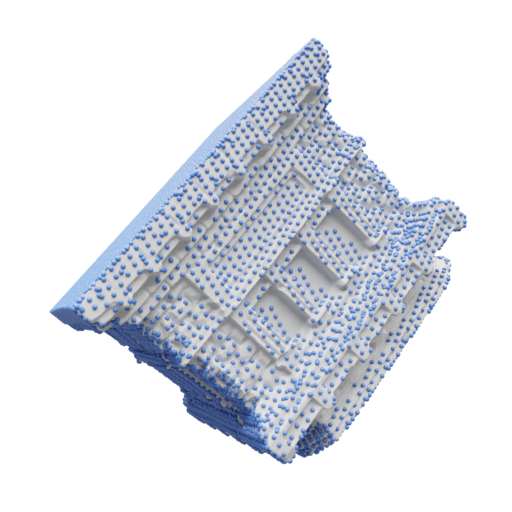}
        \caption{\textbf{Ground truth}}
        \label{subfig:truth}
    \end{subfigure}%
    \begin{subfigure}{0.25\columnwidth}
        \includegraphics[width=\columnwidth]{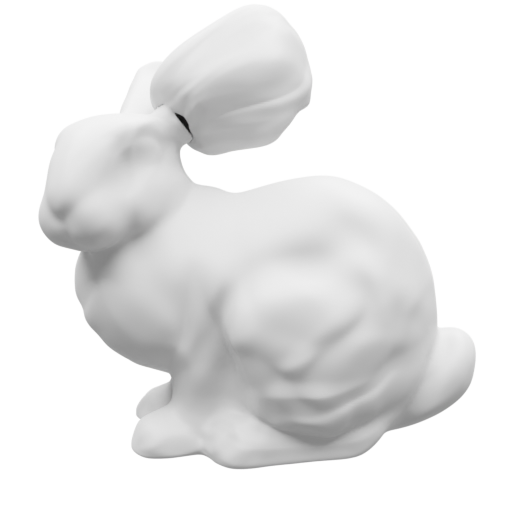}\\[-2mm]
        \includegraphics[width=\columnwidth]{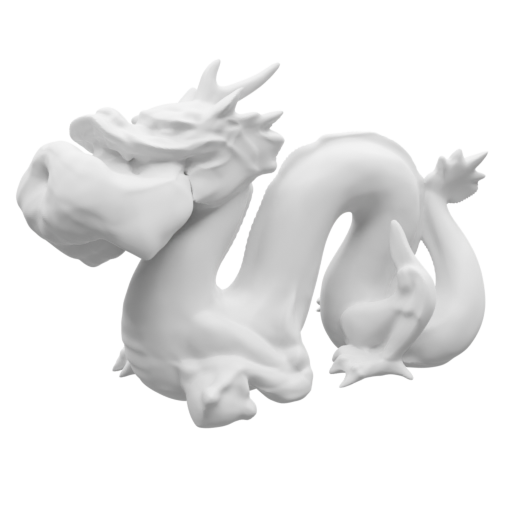}\\[-6mm]
        \includegraphics[width=\columnwidth]{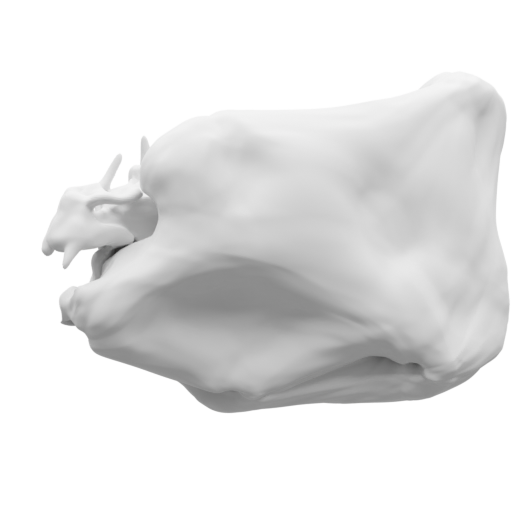}\\[-6mm]
        \includegraphics[width=\columnwidth]{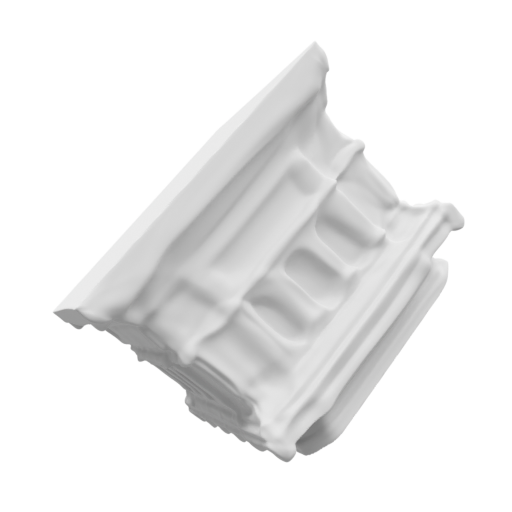}
        \caption{$\mathbf{L}_\textbf{IGR}$ \cite{gropp2020implicit}}
        \label{subfig:igr_teaser}
    \end{subfigure}%
    \begin{subfigure}{0.25\columnwidth}
        \includegraphics[width=\columnwidth]{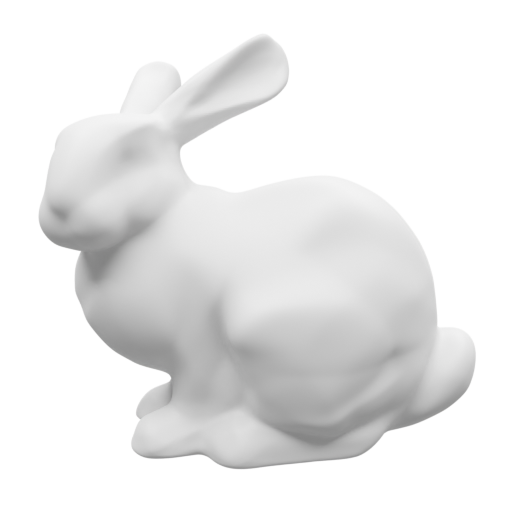}\\[-2mm]
        \includegraphics[width=\columnwidth]{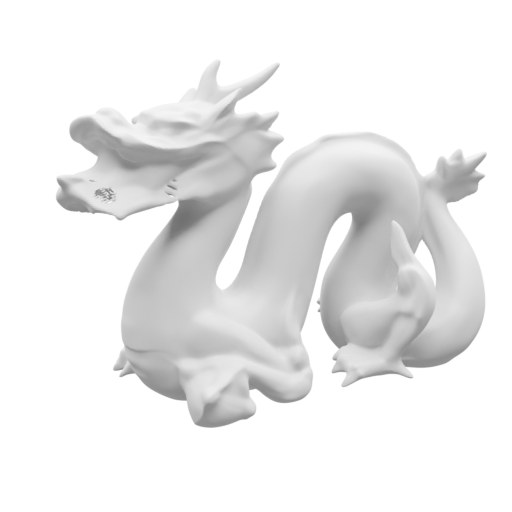}\\[-6mm]
        \includegraphics[width=\columnwidth]{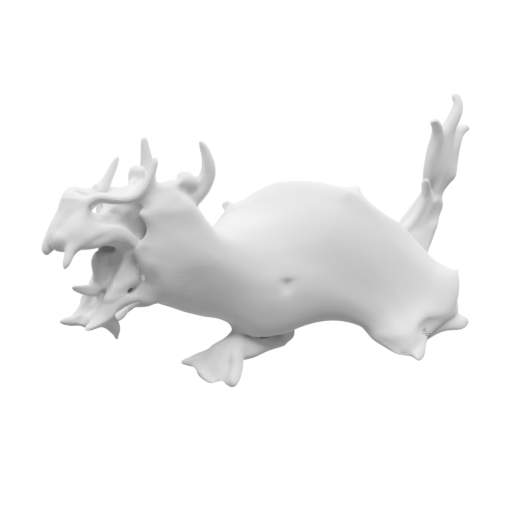}\\[-6mm]
        \includegraphics[width=\columnwidth]{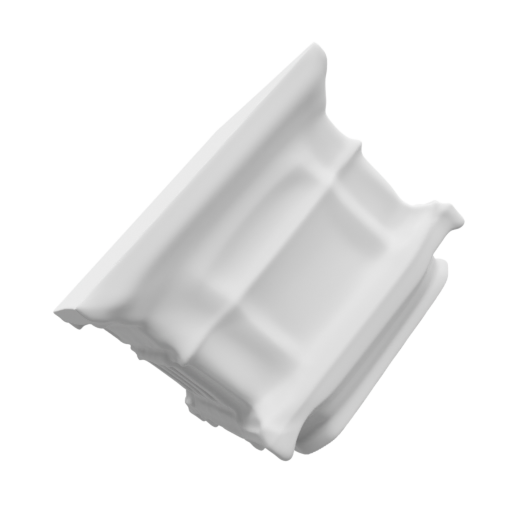}
        \caption{$\mathbf{L}_\textbf{SIREN}$ \cite{sitzmann2019siren}}
        \label{subfig:siren_teaser}
    \end{subfigure}%
    \begin{subfigure}{0.25\columnwidth}
        \includegraphics[width=\columnwidth]{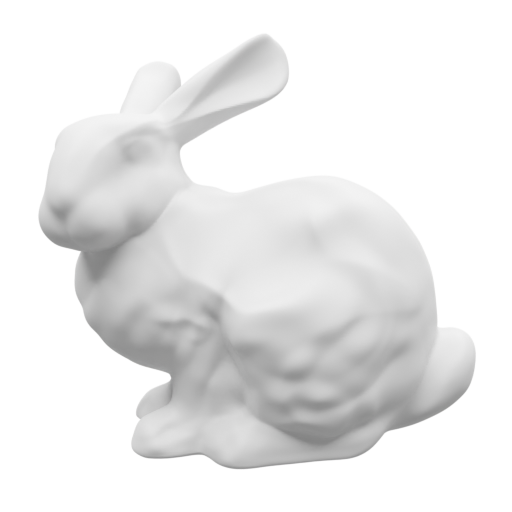}\\[-2mm]
        \includegraphics[width=\columnwidth]{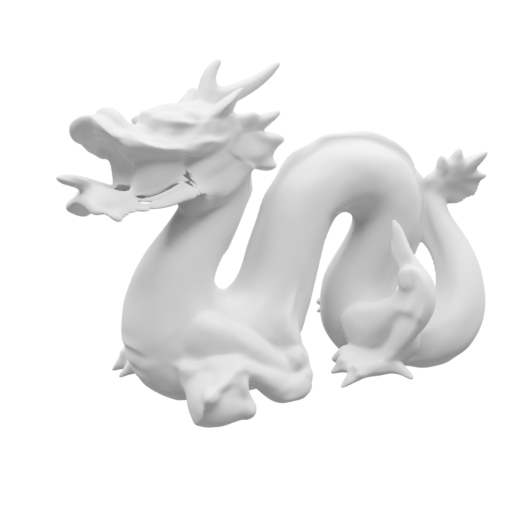}\\[-6mm]
        \includegraphics[width=\columnwidth]{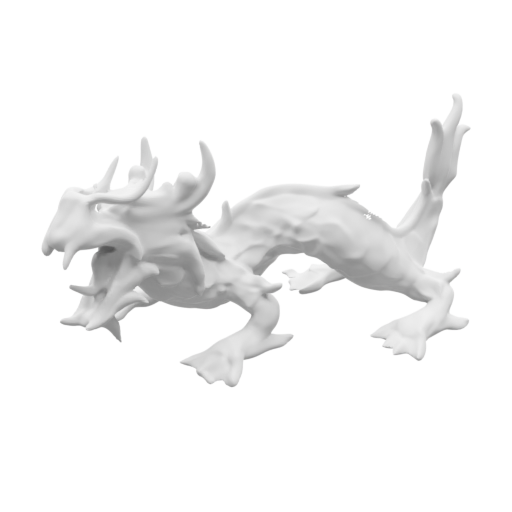}\\[-6mm]
        \includegraphics[width=\columnwidth]{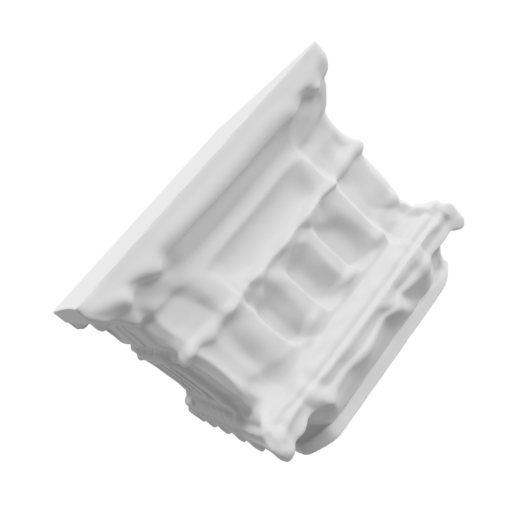}
        \caption{$\mathbf{L}_\textbf{DiffCD}$ \textbf{(ours)}}
        \label{subfig:ours_teaser}
    \end{subfigure}

    \caption{\textbf{Surface reconstruction.} Comparison between optimization-based methods for neural implicit surface reconstruction. 
    \subref{subfig:truth}) Input point cloud and ground-truth surface. 
    \subref{subfig:igr_teaser}) The IGR \cite{gropp2020implicit} loss function may produce spurious surfaces. \subref{subfig:siren_teaser}) The SIREN \cite{sitzmann2019siren} loss function (here with $\mu=0.33$) trades off the spurious artifacts against over- or under-smoothing of the surface. \subref{subfig:ours_teaser}) Our symmetric differentiable Chamfer distance resolves the issue of spurious surfaces while recovering high-quality surface details.}
    \label{fig:spurious-igr}
\end{figure}


\section{Introduction}

Neural implicit surfaces parametrize a 3D surface as the zero-level set of a scalar field predicted by a multi-layer perceptron (MLP) $f(\theta, \argdot):\mathbb{R}^3\rightarrow\mathbb{R}$:
\eq{
\label{eq:zero_level}
    \surface_\theta = \{x : f(\theta, x) = 0\},
}
where $\theta$ are the network parameters.
Neural implicit surfaces are naturally suited for surface reconstruction.
In contrast to meshes, they are continuous everywhere and can represent shapes of arbitrary topology;
in contrast to point clouds, they provide surface normal information and enable occlusion reasoning.
The implicit surface representation is also malleable and allows for complex shape manipulations, such as topological changes \cite{Yang2021GeometryProcessing}. 
Yet, as \cref{fig:spurious-igr} illustrates, optimizing over such a flexible parametrization of the surface also comes with some difficulties.
State-of-the-art techniques \cite{Ma2020NeuralPull,gropp2020implicit,sitzmann2019siren} struggle to accurately recover surfaces in practical scenarios, especially when the sensor data is incomplete and noisy.

\inparagraph{Problem setup.} We consider the problem of reconstructing an underlying surface $\surface$ from an unoriented, sparse and possibly noisy point cloud $\pointcloud = \{\tilde x_i\}_{i = 1}^n$, contained in a bounding region $\Omega$. 
Our goal is to estimate the network parameters $\theta$ such that the zero-level set $\surface_\theta$ approximates the true surface, \ie $\surface_\theta\approx\surface$.
We further require that $f(\theta, \argdot)$ approximates a \textit{Signed Distance Field} (SDF), characterized by the \emph{eikonal equation}:
\eq{
\label{eq:eikonal_eq}
    \|\grad f(\theta, x)\| = 1.
}
This ensures that the absolute value of the field at any point equals the distance to the surface.
Training $f(\theta, \argdot)$ to satisfy the eikonal equation adds regularity to the optimization problem by preventing the degenerate solution $f(\theta, \argdot) = 0$.

\inparagraph{Contributions.}
Firstly, we show that the widely used approach of training the network output to vanish on the point cloud \cite{gropp2020implicit,sitzmann2019siren, BenShabat2021DiGS, wang2023aligningHessianGradient, zixiong23neuralsingular} corresponds to minimizing only one side of the Chamfer distance --
the average distance from the input point cloud to the implicit surface.
We will refer to this quantity as the \textit{points-to-surface} Chamfer distance.
Although minimizing the one-sided Chamfer distance works in some scenarios, it may lead to \emph{spurious surfaces}, see \cref{subfig:igr_teaser}.

Spurious surfaces are large surface chunks located in the ``blind spot'' of the one-sided Chamfer distance.
This observation suggests that minimizing the full, \textit{symmetric} Chamfer distance could lead to a more accurate surface reconstruction.
As a second contribution, we therefore develop a novel loss term corresponding to the ``missing'' one-sided \textit{surface-to-points} Chamfer distance.
This term measures the average distance from the implicit surface to the point cloud.
Our proposed loss function combines the two one-sided Chamfer distances into a single loss function, the \textbf{Diff}erentiable \textbf{C}hamfer \textbf{D}istance (DiffCD).
By minimizing the symmetric Chamfer distance, DiffCD reliably mitigates the issue of spurious surfaces, achieving a substantial improvement in surface reconstruction accuracy across various types of shapes and noise levels.

As our third contribution, we theoretically justify DiffCD against a popular alternative for mitigating spurious surfaces, the off-surface loss adopted by SIREN \cite{sitzmann2019siren}.
We prove that minimizing the SIREN loss term is equivalent to regularizing the surface area of the implicit surface. This implies that the SIREN approach mitigates spurious surfaces at the cost of shrinking the surface area, resulting in over-smoothing, or even to the surface disappearing.

Overall, our work provides novel theoretical insights into surface reconstruction with neural implicit surfaces.
Building on these insights, we develop DiffCD.
It minimizes the \emph{symmetric} Chamfer distance between the point cloud and the implicit surface, leading to a more reliable surface reconstruction.

\section{Related work}
\label{sec:related}

The two dominant categories of techniques for reconstructing neural implicit surfaces from point clouds are \textit{supervised} and \textit{optimization-based}.
Here, we provide a condensed summary of the most relevant previous work.
For a broader overview, we refer interested readers to the recent survey by \citet{sulzer2023survey}.

\inparagraph{Supervised methods.} 
Supervised methods approach the surface reconstruction problem by learning from a large collection of point clouds with corresponding ground-truth meshes.
Supervised methods typically extract feature vectors from the point cloud, which are used to predict either occupancy or a signed distance value.
Points2Surf \cite{Erler2020Points2Surf} employs PointNet \cite{qi2017pointnet} and regresses the sign and distance separately by combining a fine-grained local descriptor with a coarse-grained global descriptor.
POCO \cite{Boulch:2022:POCO} instead uses point cloud convolutions \cite{boulch2020fka}.
Neural Kernel Surface Reconstruction (NKSR) \cite{huang2023nksr} defines the surface as the zero-level set of a Neural Kernel Field \cite{williams2022neuralKernelField}.
It relies on PointNet-like \cite{qi2017pointnet} encoder to extract a voxel hierarchy of kernel features from the point cloud.

\inparagraph{Optimization-based methods.}
More relevant to our work are optimization-based methods \cite{park2019deepsdf, Atzmon:2020:SAL, sitzmann2019siren, atzmon2021sald, Ma2020NeuralPull, lipman2021phase}.
They reconstruct the surface by formulating a loss function, which is optimized from scratch at inference time.
IGR \cite{gropp2020implicit}, Neural-Pull \cite{Ma2020NeuralPull} and SIREN \cite{sitzmann2019siren} all employ different loss functions to ensure that the zero-level goes through the point cloud points.
Leveraging normals in the input significantly simplifies the problem, leading to more accurate reconstruction with classical \cite{Kazhdan:2013:SPS} and neural-based \cite{gropp2020implicit,sitzmann2019siren,atzmon2021sald,Peng:2021:SAP} formulations alike.
Surface reconstruction without normals -- the scenario addressed in this work -- is highly ill-posed.
It inspires interest in regularization techniques, which typically encourage smooth solutions by means of additional loss terms.
Several methods penalize deviations from the SDF properties, most notably the eikonal equation \cite{gropp2020implicit, sitzmann2019siren}.
A related property, which can be derived from the eikonal equation, is that the gradient of an SDF lies in the nullspace of the Hessian~\cite{Mayost2014SurfaceEquations}.
This property has been used to motivate regularizing the determinant of the Hessian \cite{zixiong23neuralsingular} and the norm of the Hessian-gradient product \cite{wang2023aligningHessianGradient}.
Level-set alignment \cite{ma2023towards} also  makes use of this property indirectly by regularizing the misalignment of the gradient in the query points and their level-set projections. DiGs \cite{BenShabat2021DiGS} goes beyond SDF properties and minimizes the divergence of the scalar field, which increases surface smoothness by encouraging flat level sets.
PHASE \cite{lipman2021phase} and, as we will see in \cref{sec:ssa-surface-area-regularizer}, SIREN \cite{sitzmann2019siren} both include loss functions that regularize the surface area of the implicit surface.
In contrast to regularization, our loss term is geometric in nature. In fact, we find that tuning the weight of the eikonal loss is sufficient for adapting our method to various noise levels.

\inparagraph{Chamfer distance for shape reconstruction.}
Point2Mesh \cite{Hanocka2020Point2Mesh} and SAP \cite{Peng:2021:SAP} also develop differentiable variants of the symmetric Chamfer distance, but use an intermediate mesh representation.
In contrast, our DiffCD is defined directly in terms of the implicit surface.

\section{Background}
\subsection{Chamfer distance}
As the base of our analysis and our novel loss formulation, we leverage a widely used metric in shape reconstruction, the Chamfer distance\footnote{The Chamfer distance is not a distance \emph{metric} in the mathematical sense, as it does not satisfy the triangle inequality; it is an average over point-to-set distances.} \cite{fan2017pointChamfer3D}.
The Chamfer distance between two point sets, $\pointcloud_1$ and $\pointcloud_2$, is given by:
\eq{
\label{eqn:sym-p2p}
    \symchamfer(\pointcloud_1, \pointcloud_2) = \frac{1}{2}\Big(\chamfer(\pointcloud_1, \pointcloud_2) + \chamfer(\pointcloud_2, \pointcloud_1)\Big).
}
The term $\chamfer(\pointcloud_1, \pointcloud_2)$ is the \textit{one-sided} Chamfer distance from $\pointcloud_1$ to $\pointcloud_2$.
It simply measures the average distance between each point in $\pointcloud_1$ and its corresponding closest point in $\pointcloud_2$:
\eq{
\label{eqn:p2p}
    \chamfer(\pointcloud_1, \pointcloud_2) = \frac{1}{|\pointcloud_1|}\sum_{x_1 \in \pointcloud_1} \underset{x_2 \in \pointcloud_2}{\text{min}} \| x_1 - x_2 \|,
}
where $\|\argdot\|$ is the $\ell^2$-norm and $|\pointcloud_1|$ is the number of points in $\pointcloud_1$.
Although the one-sided Chamfer distance can be a reliable similarity metric if $\pointcloud_1$ and $\pointcloud_2$ have similar density and size, it can be misleadingly small even in cases of substantial shape mismatch.
Notably, points in $\pointcloud_2$ which are not the nearest neighbor of any point in $\pointcloud_1$ will have no impact on the value of $\chamfer(\pointcloud_1, \pointcloud_2)$.

\subsection{IGR and spurious surfaces}
\label{sec:igt-chamfer}
Let us recall implicit geometric regularization (IGR) \cite{gropp2020implicit}, which serves as the base loss function for several follow-up works \cite{sitzmann2019siren, BenShabat2021DiGS, zixiong23neuralsingular, wang2023aligningHessianGradient}.
In the normal-free unoriented setting, IGR tackles the surface reconstruction problem by minimizing the following loss function:
\eq{
\label{eqn:igr-loss}
    \IGRloss(\theta) \!=\!
        \frac{1}{n}\sum_{i = 1}^{n}|f(\theta, \tilde x_i)| + \lambda\Eikonalloss(\theta).
}
The first term in \cref{eqn:igr-loss} ensures that the network output is close to $0$ on the points in the point cloud, $\tilde x_i \in \mathcal{P}$.
The second term is the \textit{eikonal loss}.
It ensures that $f(\theta, \argdot)$ approximates an SDF by penalizing deviations from the eikonal equation in \cref{eq:eikonal_eq}:
\eq{
\label{eqn:eikonal-loss}
    \Eikonalloss(\theta) = \frac{1}{|\globalsamples|}\sum_{\globalsample\in\globalsamples}(\|\grad f(\theta, \globalsample)\| \!-\! 1)^2.
}
$\globalsamples$ is the set of sample points at which the eikonal loss should be computed.
The hyperparameter $\lambda$ in \cref{eqn:igr-loss} is the \textit{eikonal weight} and it determines the trade-off between the two terms.
As we discuss in \cref{sec:eikonal_weight}, the value of $\lambda$ can have a significant impact on the reconstructed surface.

\inparagraph{IGR approximates a one-sided Chamfer distance.}
Using the fact that $f(\theta, \argdot)$ will approximate the signed distance field of $\surface_\theta$, we can adopt a geometric interpretation of \cref{eqn:igr-loss}.
Namely, when the eikonal loss is small, each term in the sum will approximate a distance to the implicit surface:
\eq{
\label{eqn:approx-sdf-eikonal}
    |f(\theta, \tilde x)|\approx \min_{x\in\surface_\theta}\|\tilde x - x\|
    \text{, when } \Eikonalloss(\theta)\approx 0.
}
Plugging this approximation back into \cref{eqn:igr-loss}, it becomes clear that $\IGRloss$ in turn approximates the average distance between the point cloud and the implicit surface, which we can identify as a one-sided Chamfer distance (\cf \cref{sec:igt-chamfer}).
That is, we have:
\eq{
\label{eqn:IGRloss-exact-sdf}
    \IGRloss(\theta) \approx 
    \frac{1}{n}\sum_{i = 1}^{n}\min_{x\in\surface_\theta}\|\tilde x_i - x\|^p =
    \chamfer(\pointcloud, \surface_\theta)
    \text{, when } \Eikonalloss(\theta)\approx 0.
}
Note that \cref{eqn:IGRloss-exact-sdf} expands the definition of the Chamfer distance from \cref{eqn:p2p}: the minimum here is taken over a continuous surface $S_\theta$.
We will refer to \cref{eqn:IGRloss-exact-sdf} as the \textit{points-to-surface} Chamfer distance between the point cloud and the implicit surface.

\inparagraph{Spurious surfaces.} A critical shortcoming of minimizing only one side of the Chamfer distance is that it cannot account for the surface parts far away from every point in $\pointcloud$.
As illustrated in \cref{fig:spurious-igr} and \cref{fig:2d-diffcd}, this leads to \textit{spurious surfaces} in the reconstructions produced by IGR.
Specifically, large blob-shaped artifacts tend to accumulate near repeated thin structures and in the areas where the point cloud has missing coverage.
Introduced next, our approach addresses this issue.

\section{DiffCD: A Symmetric Differentiable Chamfer Distance}
\label{sec:diffcd}
\begin{figure}[t]
    \newcommand{\figwidth}{.17\columnwidth}
    \centering
    \begin{subfigure}[t]{0.03\columnwidth}
        \rotatebox[origin=l]{90}{\makebox[0.5cm]{}}%
    \end{subfigure}%
    \begin{subfigure}{.23\columnwidth}
        \begin{overpic}[width=\linewidth, trim={0cm .0cm 0cm .0cm}, clip]{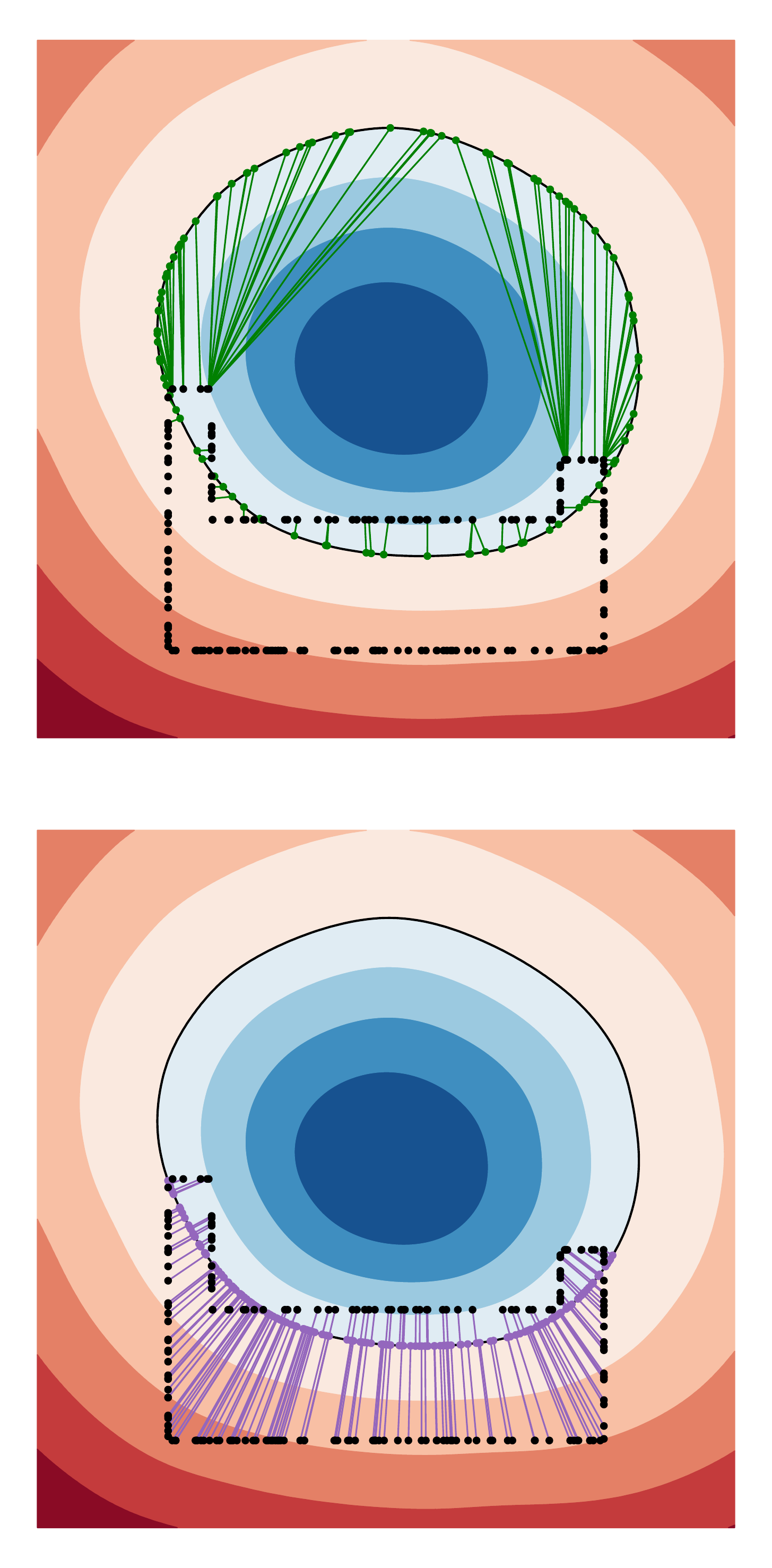}
            \put(-2, 75){\makebox(0,0){\rotatebox{90}{\Centerstack{
                Surface-to-Points
            }}}}
            \put(-3, 25){\makebox(0,0){\rotatebox{90}{\Centerstack{
                Points-to-Surface
            }}}}
        \end{overpic}
        \caption{\textbf{Initialization}}
        \label{subfig:2d-initial}
    \end{subfigure}
    \begin{subfigure}{.23\columnwidth}
        \includegraphics[width=\linewidth]{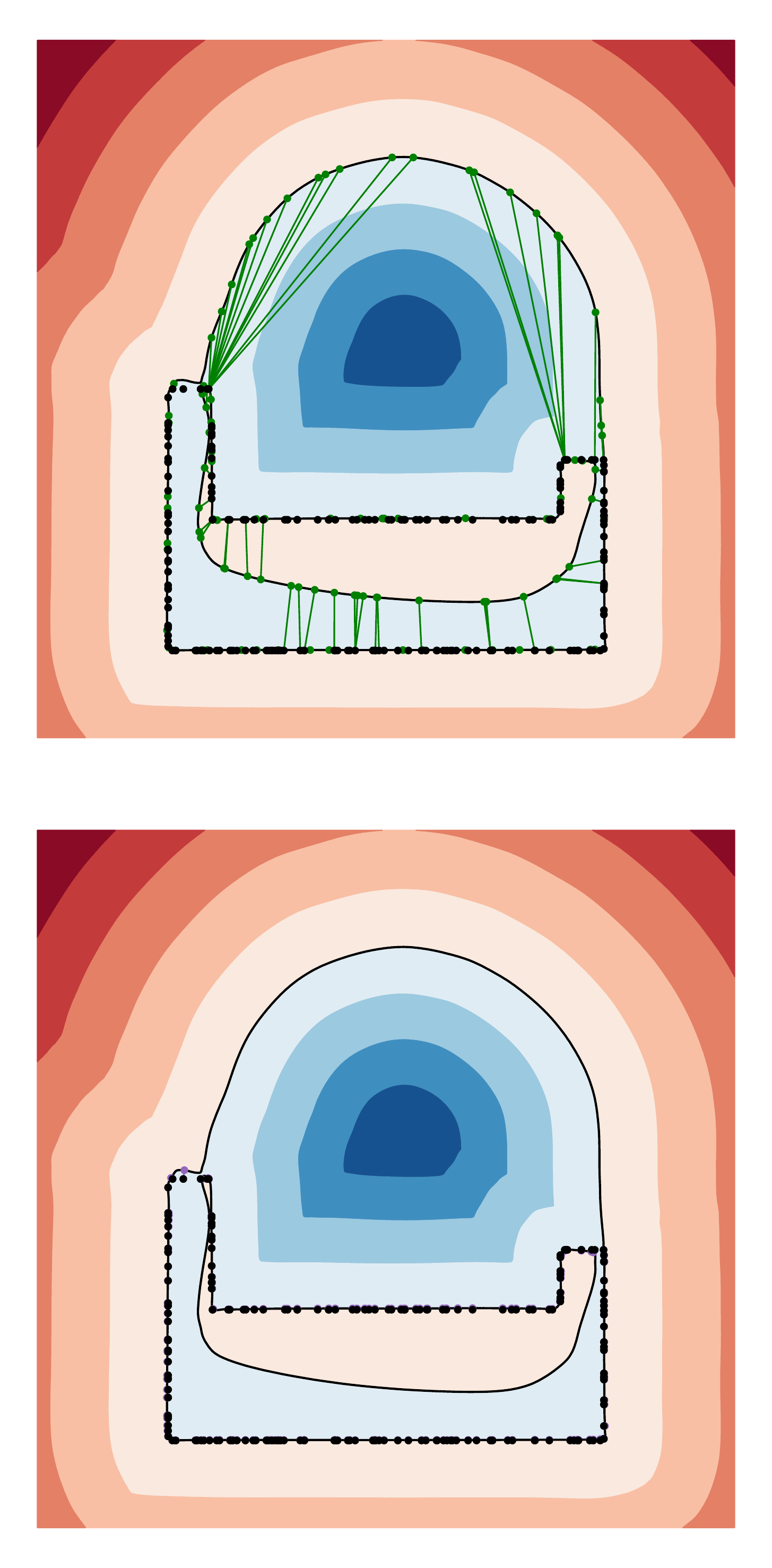}
        \caption{$\mathbf{L}_\textbf{IGR}$ \cite{gropp2020implicit}}
        \label{subfig:2d-igr}
    \end{subfigure}
    \begin{subfigure}{.23\columnwidth}
        \includegraphics[width=\linewidth]{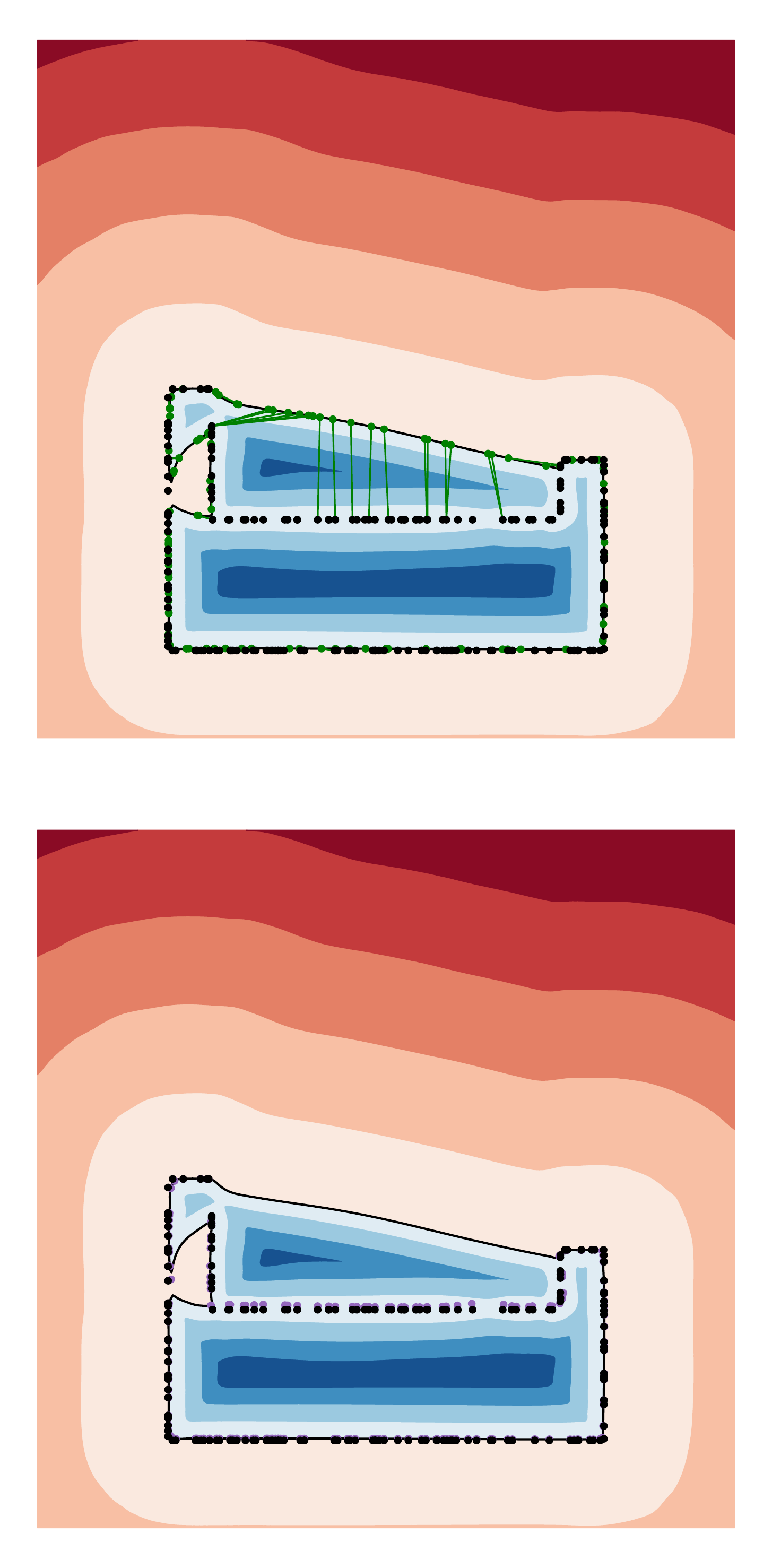}
        \caption{$\mathbf{L}_\textbf{SIREN}$ \cite{sitzmann2019siren}}
        \label{subfig:2d-siren}
    \end{subfigure}
    \begin{subfigure}{.23\columnwidth}
        \includegraphics[width=\linewidth]{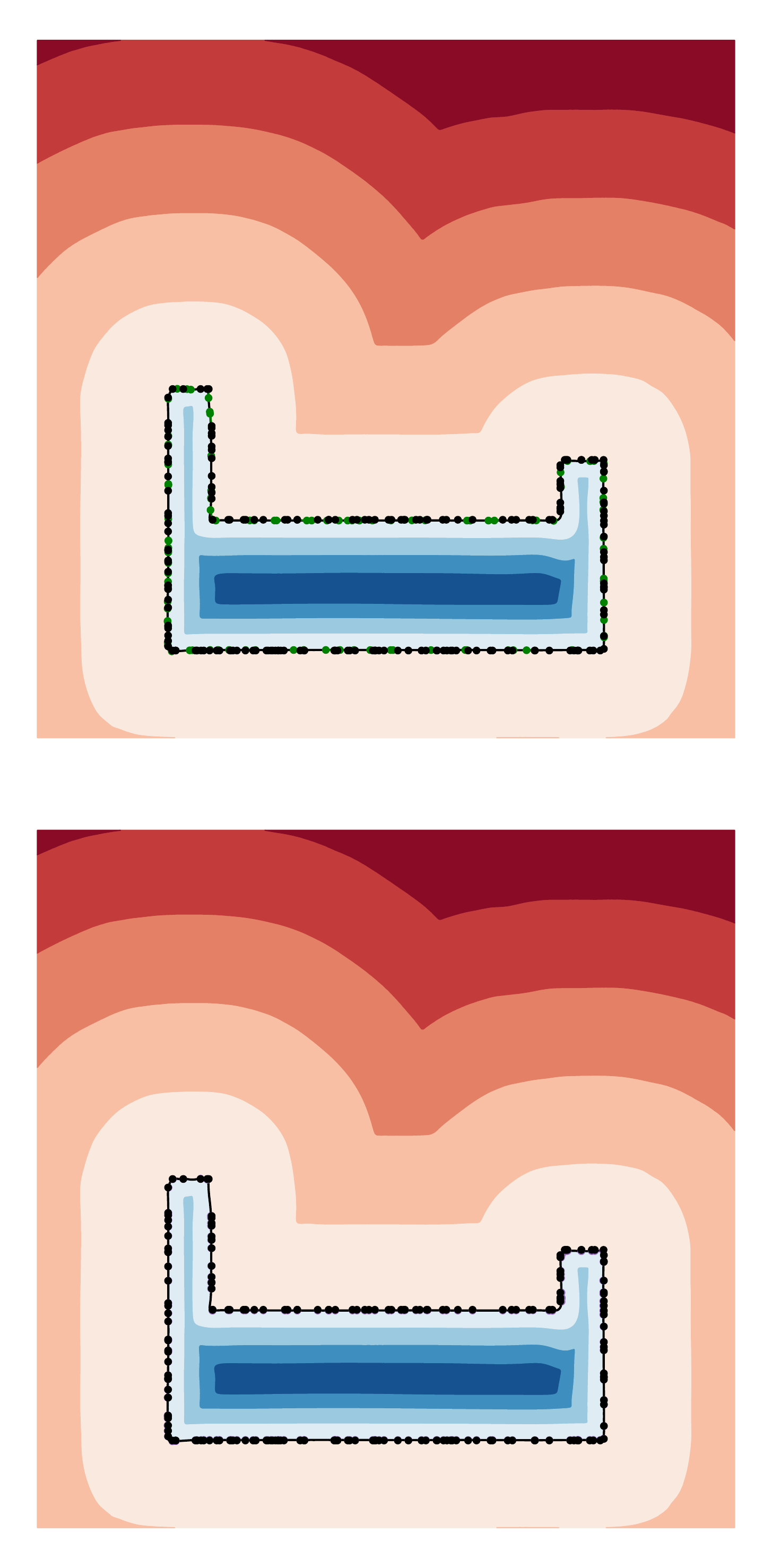}
        \caption{$\mathbf{L}_\textbf{DiffCD}$ \textbf{(ours)}}
        \label{subfig:2d-diffcd}
    \end{subfigure}
    \caption{\textbf{2D surface reconstruction.} Fitting a neural implicit surface to 
    a 2D point cloud. 
    \subref{subfig:2d-initial}) Initial surface.
    \subref{subfig:2d-igr}) IGR \cite{gropp2020implicit} only minimizes the points-to-surface chamfer distance, leaving a large spurious surface even after many optimizer steps.
    \subref{subfig:2d-siren}) Adding the loss term proposed in SIREN \cite{sitzmann2019siren} removes the spurious surface, but also produces an overly smooth solution.
    \subref{subfig:2d-diffcd}) Our method, DiffCD, also minimizes the missing surface-to-points distance, resulting in a tight fit.
    }
    \label{fig:2d-diffcd}
\end{figure}

In order to resolve the issue of spurious surfaces, we propose extending the one-sided Chamfer distance in \cref{eqn:igr-loss} to a full \emph{symmetric} Chamfer distance.
By doing so, we explicitly account for the spurious surfaces with the loss function.
To this end, we add the ``missing'' Chamfer term, namely the \textit{surface-to-points} Chamfer distance.
The surface-to-points Chamfer distance is the average distance from the surface to the point cloud:
\eq{
\label{eqn:s2p-loss}
    \chamfer(\surface_\theta, \pointcloud) = \frac{1}{n}\sum_{i=1}^n\min_{j=1,\ldots,N}\|x_i(\theta) - \tilde x_j\|,
}
where $x_i(\theta)$ are points sampled uniformly from the implicit surface $S_\theta$.
Note that \cref{eqn:s2p-loss} is an \textit{explicit} loss; it is defined geometrically and does not assume $f(\theta, \argdot)$ to be an SDF.
Below, we propose an efficient sampling procedure and a method for computing the gradients of the surface-to-points loss.

\cref{fig:2d-diffcd} illustrates a 2D example of jointly minimizing both sides of the Chamfer distance.
The sample points $x_i(\theta)$ depend on the network parameters $\theta$ and are constrained to lie on the implicit surface.
By minimizing \cref{eqn:s2p-loss}, the gradient descent will tend to move each surface point $x_i(\theta)$ toward the closest point in the point cloud.
This surface pulling toward the point cloud will eventually remove any spurious surfaces by molding them into the existing surface.

Having defined the surface-to-points loss, we introduce \emph{Differentiable Chamfer Distance}, or \emph{DiffCD}.
Similar to the Chamfer distance for point clouds, we take the average of the two one-sided Chamfer distances to obtain:
\eq{
\label{eqn:diffcd-loss}
    \DiffCDloss(\theta) &= \frac{1}{2}\bigg(\frac{1}{n}\sum_{i=1}^n|f(\theta, \tilde x_i)| 
    +
    \chamfer(\surface_\theta, \pointcloud)\bigg)
    +\lambda\Eikonalloss(\theta).
}
Using \cref{eqn:IGRloss-exact-sdf}, we can confirm that \cref{eqn:diffcd-loss} indeed approximates the symmetric Chamfer distance when the eikonal loss is small:
\eq{
    \DiffCDloss(\theta)\approx\symchamfer(\surface_\theta, \pointcloud)\text{, when } \Eikonalloss(\theta)\approx 0.
}
That is, minimizing our proposed loss function \cref{eqn:diffcd-loss} implicitly minimizes the fully symmetric Chamfer distance between the point cloud and the surface, as opposed to just the one-sided Chamfer distance.

\subsection{Computing gradients}
In order to minimize \cref{eqn:s2p-loss} with backpropagation, we need a method for computing surface point derivatives, $\der_\theta x_i(\theta)$, where $\theta\in\mathbb{R}$ is a scalar network parameter.
For this, we use the method for controlling level sets proposed by \citet{atzmon2019controlling}.
The key idea is to make use of the level set equation $f(\theta, x_i(\theta)) = 0$, which must hold for all $\theta$. Differentiating both sides of the level set equation \wrt $\theta$ and rearranging terms yields the constraint:
\eq{
    g^T\der_\theta x_i(\theta) = -f_\theta,
}
where we have used the shorthands $g := \grad f(\theta, x_i(\theta))$ and $f_\theta := \der_\theta f(\theta, x_i(\theta))$.
That is, the gradient $\der_\theta x_i(\theta)$ lies on a plane with normal $g$ at a distance $|f_\theta| / \|g\|$ from the origin. So we can set $\der_\theta x_i(\theta) = \xi-f_\theta g/\|g\|^2$, where $\xi\in\mathbb{R}^3$ is any vector orthogonal to $g$.

Similarly to \citet{atzmon2019controlling}, we take the minimum norm solution, $\xi = 0$.
While this choice may seem arbitrary, we find that it has a significant advantage when combined with gradient descent. 
Namely, it will naturally lead to linearly interpolating surfaces when the input point cloud is sparse.
To see this, we note that the derivative of each term in \cref{eqn:s2p-loss} is:
\eq{
    \label{eqn:sample-distance-grad}
    \der_\theta\|x_i(\theta) - \tilde x\| = -f_\theta\frac{(x_i(\theta) - \tilde x)^Tg}{\|x_i(\theta) - \tilde x\|\|g\|^2}.
}
Importantly, \cref{eqn:sample-distance-grad} has the property that $\der_\theta\|x_i(\theta) - \tilde x\| = 0$ whenever the residual vector $x_i(\theta) - \tilde x$ is orthogonal to the surface normal $g/\|g\|$.
This means that flat surfaces are stable solutions, in the sense that the gradient of \cref{eqn:s2p-loss} will be 0 if the closest point to each sample point lies on the plane orthogonal to the surface normal at the sample point.

\subsection{Sampling $S_\theta$}
\begin{wrapfigure}{r}{0.42\textwidth}
        \centering
        \includegraphics[width=.49\linewidth]{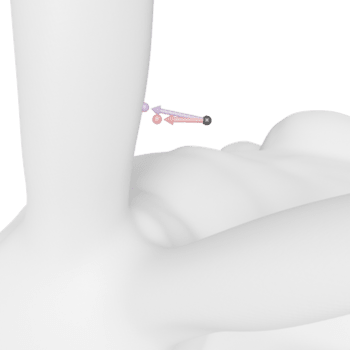}
        \rotatebox{0}{\begin{overpic}[
            width=.49\linewidth,
            angle=270,
            trim={.3cm .3cm .3cm .3cm}, clip,
        ]{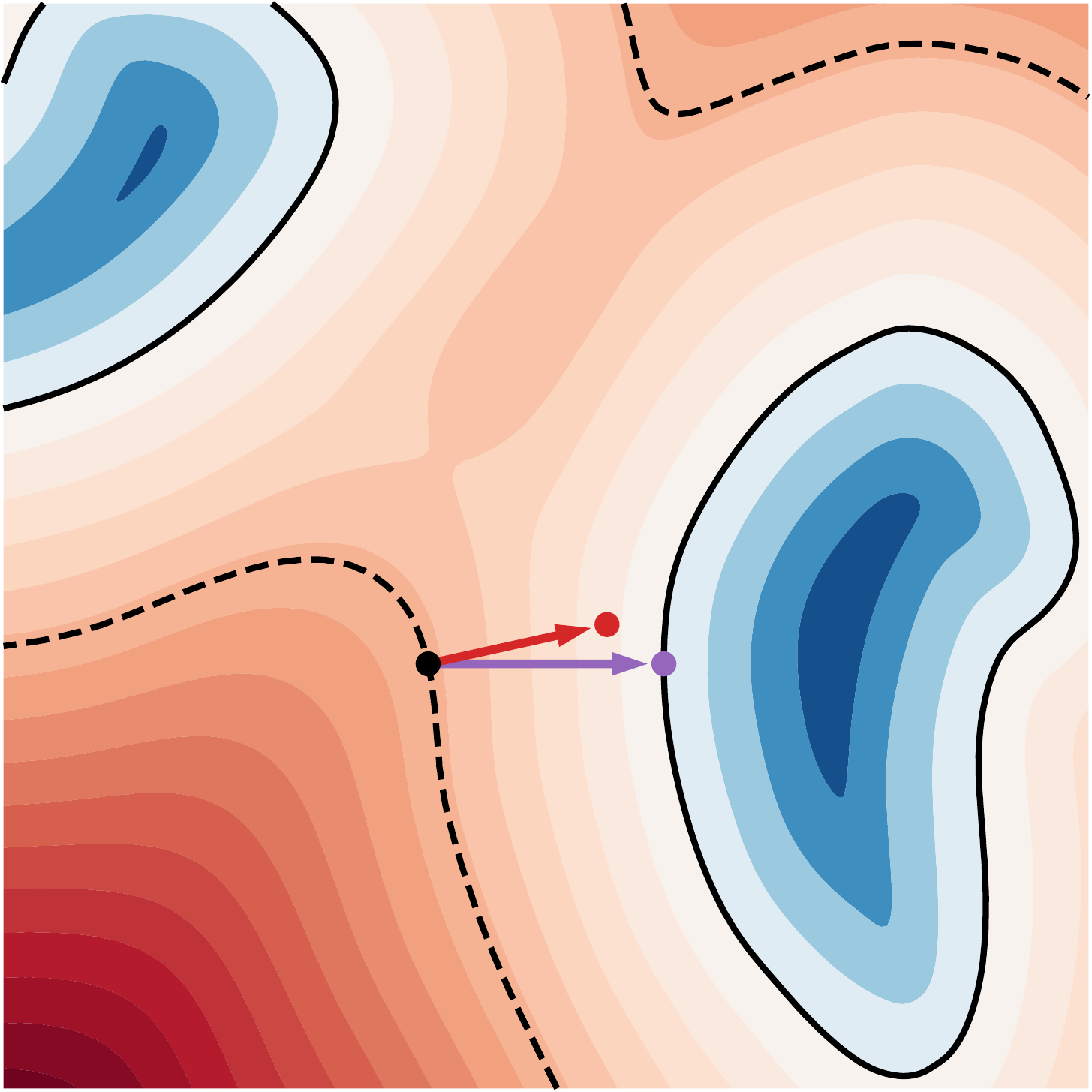}
            \put(32, 65){\scalebox{0.8}{$p_i$}}
            \put(25, 41){\scalebox{0.8}{$x^*$}}
            \put(46, 44){\scalebox{0.8}{$p_i\!-\!fg$}}
        \end{overpic}}          
    \caption{Closest point $x^*$ compared with the SDF approximation. Due to the inaccurate level set (dashed line) at the sample point $p_i$, the gradient $g$ does not point towards the closest point on the surface (full line).}
    \label{fig:approx_closest_point}
\end{wrapfigure}
In order to compute \cref{eqn:s2p-loss}, we also need an efficient method for uniform sampling of points from $\surface_\theta$ in batches. To this end, we build on the approach from \citet{Yang2021GeometryProcessing}.
It operates by first sampling a large number of points $p_i$ uniformly from $\Omega$ and rejecting the points if $|f(\theta, p_i)| > \tau$.
Then, it generates the surface samples by computing 
\eq{
    \label{eqn:sdf-pull}
    x_i(\theta) = p_i - f(\theta, p_i)\grad f(\theta, p_i),
}
which equals to the closest point of $p_i$ on $\surface_\theta$ for the case when $f(\theta, \argdot)$ is an SDF.
However, sampling $\Omega$ densely enough to obtain a sufficient amount of points on $S_{\theta_k}$ at each iteration $k$ of network training is costly.
To improve efficiency, we leverage the observation that $\surface_{\theta_\trainIter}$ changes gradually during training.
We extract a mesh $\mesh_\trainIter$ every $\meshInterval$ iterations using marching cubes, which approximates $\surface_{\theta_\trainIter}$.
On this mesh $\mesh_k$, we uniformly sample a large bank of points $\meshSamples$.
It is from this $\meshSamples$, rather than $\Omega$, that we sample points $p_i$ uniformly in \cref{eqn:sdf-pull} for the next $\meshInterval$ training iterations.
This approach greatly decreases the number of rejected points at each iteration.
We use $\meshInterval\equal1000$ to strike the balance between the computational cost of marching cubes and the approximation quality of the implicit surface.

We further account for the fact that $f(\theta, \argdot)$ may not be a good approximation of an SDF everywhere in $\Omega$.
Using \cref{eqn:sdf-pull} when $f(\theta, x)$ is not an SDF can produce points that are far from the zero-level set $\surface_\theta$, as demonstrated in \cref{fig:approx_closest_point}.
As a solution, we propose to replace the projection in \cref{eqn:sdf-pull} with \textit{SDF-descent} by setting $x_i(\theta) = x_i^M$, where
\eq{
\label{eqn:sdf-descent}
    x_i^{m+1} = x_i^m - f(\theta, x_i^m)\frac{\grad f(\theta, x_i^m)}{\|\grad f(\theta, x_i^m)\|},
    \quad
    x_i^0 := p_i,
}
which will take multiple steps using the SDF approximation.
Note that \cref{eqn:sdf-descent} is equivalent to \cref{eqn:sdf-pull} when $f(\theta, \argdot)$ is an SDF.
We reject any sample point with $|f(\theta, x_i^M)| > \epsilon$, which indicates that SDF-descent did not converge to a point on the implicit surface.
In our experiments, we set $\epsilon = 0.001$ and $M=4$.

\section{The SIREN loss -- a surface area regularizer}
\label{sec:ssa-surface-area-regularizer}
The off-surface term from \siren~\cite{sitzmann2019siren} is a widely used alternative to mitigating spurious surfaces.
Here, we will examine this loss term and establish its link to the surface area, which explain some undersirable side-effects of this regularization in surface reconstruction.

Let us begin by recalling the regularization term from \siren~\cite{sitzmann2019siren}:\footnote{Our definition varies slightly from \citet{sitzmann2019siren} in order to simplify the upcoming analysis: \textit{(i)} We scale the loss by a factor $\alpha$/2, and \textit{(ii)} integrate over $\Omega$ instead of $\Omega\setminus\surface$, which better models the sampling-based implementation.}
\eq{
\label{eqn:siren-term}
    \SSAloss^\alpha(\theta) = \frac{\alpha}{2}\int_{\Omega}e^{-\alpha|f(\theta, x)|}\diff x,
}
which penalizes having distance field values close to zero uniformly in $\Omega$.
\cref{eqn:siren-term} is commonly referred to as an off-surface or non-manifold loss \cite{BenShabat2021DiGS, zixiong23neuralsingular}. However, our upcoming analysis reveals that $\SSAloss^\alpha$, in fact, behaves as a \textit{surface area regularizer}, when it is combined with the eikonal loss.
For this reason, we opt to refer to it as the \textit{SIREN Surface Area} (SSA) loss.
The full SIREN loss without normals is given by:
\eq{
\label{eqn:siren-loss}
    \IGRSSAloss^\alpha(\theta) = \frac{1}{n}\sum_{i=1}^n|f(\theta, x_i)| + \lambda\Eikonalloss(\theta) + \tilde\mu\SSAloss^\alpha(\theta),
}
where $\tilde\mu$ is a weighting parameter.
To ensure consistency of the weighting parameter with \siren~\cite{sitzmann2019siren}, we will from now on work with the rescaled weight $\mu = \tilde\mu|\Omega|\alpha/2$.

Looking closer at \cref{eqn:siren-term}, the value of the penalty term $e^{-\alpha|f(\theta, x)|}$ will be close to 1 near the surface, and close to 0 otherwise,
with a cut-off distance at roughly $1/\alpha$ from the surface.
We might therefore already expect the integral to be proportional to the area of the implicit surface, denoted by $|\surface_\theta|$.
The following theorem will confirm this intuition by reinterpreting $\SSAloss^\alpha(\theta)$ as a surface integral. In fact, the value of $\SSAloss^\alpha(\theta)$ approaches $|\surface_\theta|$ in the limit of large $\alpha$ -- at least in the case when $f(\theta, \argdot)$ is an SDF.

\begin{theorem}
    \label{thm:ssa}
    If $\SSAloss^\alpha(\theta)$ is defined as in \cref{eqn:siren-term} and $g(\theta, x) = \grad f(\theta, x)$, then $\SSAloss^\alpha(\theta)$ approaches the surface integral of $1/\|g(\theta, x)\|$ over the surface $S_\theta$ as $\alpha\rightarrow\infty$. That is:
    \eq{
        \SSAloss^\infty(\theta)
        =
        \lim_{\alpha\rightarrow\infty}\SSAloss^\alpha(\theta)
        =
        \int_{x\in S_\theta}\frac{1}{\|g(\theta, x)\|}dS.
    }
    Additionally, if $f(\theta, \argdot)$ is an SDF, $\SSAloss^\alpha(\theta)$ approaches the surface area of $\surface_\theta$:
    \eq{
        \SSAloss^\infty(\theta) = |\surface_\theta|.
    }
\end{theorem}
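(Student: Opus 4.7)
The approach is to rewrite the volume integral defining $\SSAloss^\alpha$ via the coarea formula and then pass to the limit by recognising $\tfrac{\alpha}{2}e^{-\alpha|t|}$ as an approximate identity. Concretely, the coarea formula for the Lipschitz map $f(\theta,\argdot)$ asserts that for any measurable $h$,
\[
\int_{\Omega} h(x)\,\|g(\theta,x)\|\,\diff x
\;=\;
\int_{-\infty}^{\infty}\!\!\int_{\{f(\theta,\argdot)=t\}\cap\Omega} h(x)\,\diff S\,\diff t.
\]
Applying this with $h(x) = \tfrac{\alpha}{2}e^{-\alpha|f(\theta,x)|}/\|g(\theta,x)\|$, and using that the exponential is constant along each level set, gives
\[
\SSAloss^\alpha(\theta)
\;=\;
\int_{-\infty}^{\infty}\frac{\alpha}{2}e^{-\alpha|t|}\,\phi(t)\,\diff t,
\qquad
\phi(t) := \int_{\{f(\theta,\argdot)=t\}\cap\Omega} \frac{1}{\|g(\theta,x)\|}\,\diff S.
\]

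Next I would observe that $\tfrac{\alpha}{2}e^{-\alpha|t|}$ is a probability density (the Laplace distribution) whose mass concentrates at $t=0$ as $\alpha\to\infty$, i.e.\ a standard mollifier. Provided $\phi$ is continuous at $0$, the integral therefore converges to $\phi(0)$, which is exactly the claimed surface integral $\int_{\surface_\theta} 1/\|g(\theta,x)\|\,\diff S$ (assuming, as is natural, that $\surface_\theta\subset\Omega$). The second statement is then immediate: if $f(\theta,\argdot)$ is an SDF, the eikonal equation yields $\|g(\theta,\argdot)\|\equiv 1$, so the integrand collapses to $1$ and $\phi(0)=|\surface_\theta|$.

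The main technical obstacle is justifying the limit $\int \tfrac{\alpha}{2}e^{-\alpha|t|}\phi(t)\,\diff t \to \phi(0)$. This requires continuity of $\phi$ at $t=0$, which is ensured if $0$ is a regular value of $f(\theta,\argdot)$ (so $\|g\|$ is bounded away from $0$ in a neighbourhood of $\surface_\theta$) and the network is smooth enough for the level sets to vary continuously nearby; a tubular neighbourhood / implicit function argument then identifies $\lim_{t\to 0}\phi(t)$ with the surface integral on $\surface_\theta$. These are mild regularity assumptions consistent with the paper's setup, and if one is willing to settle for a bound rather than an exact limit, dominated convergence using $|\Omega|<\infty$ already controls the contribution from the rest of the real line.
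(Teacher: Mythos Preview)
Your argument is correct and complete in spirit, but it proceeds along a genuinely different route from the paper's. You invoke the coarea formula up front to foliate $\Omega$ by \emph{all} level sets of $f$, reducing $\SSAloss^\alpha$ to a one-dimensional convolution of the Laplace kernel against the level-set functional $\phi(t)$, and then pass to the limit via the approximate-identity property. The paper instead takes the limit first, working formally with $\int_\Omega \delta(f(\theta,x))\,\diff x$, and then decomposes the volume integral only in a tubular neighbourhood of the zero level set, splitting it into a surface integral over $\surface_\theta$ times a line integral along the surface normal; the factor $1/\|g\|$ drops out of the distributional identity $\delta(h(\tau))=\delta(\tau)/|h'(0)|$. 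Your approach is arguably more rigorous out of the box (coarea is a theorem with clear hypotheses, and you flag the needed regularity of $\phi$ at $0$ explicitly), whereas the paper's tubular-neighbourhood picture is more geometrically transparent but leans on informal delta-calculus. Notably, the paper itself remarks after the proof that the result is a variant of the coarea formula, so your derivation makes that connection explicit rather than leaving it as a comment.
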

\begin{proof}
See \cref{sec:proof}.
The key idea is to recognize that the penalty term $\frac{\alpha}{2}e^{-\alpha|f(\theta, x)|}$ approaches a Dirac delta centered on $\surface_\theta$ as $\alpha\rightarrow\infty$.
This allows us to integrate out one spatial dimension, leaving a surface integral weighted by the factor $1/\|g(\theta, x)\|$. When $f(\theta,\argdot)$ is an SDF, the surface integral reduces to the surface area since $\|g(\theta, x)\| = 1$ for all $x$. \qed
\end{proof}

As a consequence of \cref{thm:ssa}, minimizing $\SSAloss^\alpha(\theta)$ will tend to ``shrink away'' spurious surfaces. But it will also tend to remove details from the actual shape.
If the weight parameter $\mu$ is too large, the zero-level set may even disappear fully.
We demonstrate this effect analytically for a toy example in \cref{sec:toy-example}, and empirically in \cref{sec:ssa-smoothing}.

We remark that \cref{thm:ssa} is a variant of the ``co-area'' formula \cite{Federer1996, MORGAN200021} adapted to the special case of neural implicit surfaces. 
The idea of regularizing surface area using an integral over delta functions dates back to \citet{Chan2001Contours}, where it was used for image segmentation.

\inparagraph{Sample approximation.} In practice, the volume integral in \cref{eqn:siren-term} cannot be computed exactly.
Instead, the loss term is computed using a sample average: 
\eq{
    \label{eqn:ssa-sample}
    \sampleSSAloss^\alpha(\theta)=\frac{|\Omega|}{K}\frac{\alpha}{2}\sum_{i=1}^Ke^{-\alpha|f(\theta, x_i)|},
}
where the points $\{x_i\}_{i=1}^K$ are sampled uniformly from $\Omega$.
We numerically verify that \cref{eqn:ssa-sample} is a good approximation of \cref{eqn:siren-term} in \cref{sec:siren-sample-approximation}.

\section{Experiments}
\label{sec:experiments}

\subsection{Implementation details}
\label{sec:implementation-details}
We compare DiffCD against the loss functions from IGR \cite{gropp2020implicit}, \neupull~\cite{Ma2020NeuralPull} and SIREN \cite{sitzmann2019siren} (without the normal alignment term).
In order to ensure a fair comparison between loss functions, we implement all methods in a common framework using JAX \cite{jax2018github}. We use the same MLP  architecture and training procedure for all methods. See \cref{sec:additional-details} for more details.
We also compare our approach with supervised methods Points2Surf \cite{Erler2020Points2Surf}, POCO \cite{Boulch:2022:POCO} and NKSR \cite{huang2023nksr}. 
For Points2Surf and POCO, we use the checkpoints trained on the ABC dataset \cite{Koch_2019_ABC}.
For NKSR, we also use the model trained on ABC. Since this model requires normals as input, we use the normal-free model trained on ScanNet \cite{dai2017scannet} in order to estimate surface normals.
We note that this leaves NKSR at a disadvantage compared to the other supervised methods, which were trained fully on ABC.
We will denote this variant by NKSR$^\dagger$ in order to emphasize the distinction.

\inparagraph{Parameter settings.} As our base setting, we use $\lambda\equal 0.1$, as in IGR \cite{gropp2020implicit}. We use $\mu\equal 0.033$ to keep the same ratio between the point cloud loss and the SSA loss as in SIREN \cite{sitzmann2019siren}.
We also evaluate SIREN with $\mu=0.33$ since we find that the original value is too small to remove the most prominent spurious surfaces (see \cref{sec:ssa-smoothing}).
As we demonstrate in \cref{sec:eikonal-smoothing}, using larger $\lambda$ produces smoother surfaces and helps avoid wrinkly surfaces that overfit the noise.
We therefore increase $\lambda$ to 0.5 and 1 in the ``Medium Noise'' and ``Max Noise'' settings, respectively, for all methods that use the eikonal loss.

\subsection{Surface reconstruction}
\begin{table}[t]
\scriptsize
\sisetup{detect-weight=true,detect-inline-weight=math}
\setlength\tabcolsep{4pt}
\caption{\textbf{Reconstruction accuracy on FAMOUS.} Lower is better. The best-performing method in each category is marked in bold. $(^{\dagger})$ indicates that the normal estimator of NKSR is trained on 
 ScanNet. DiffCD consistently outperforms other optimization-based methods and remains competitive with supervised methods.}
\begin{tabularx}{\linewidth}{
X
S[table-format=1.3, table-column-width=3.1em]@{\hspace{0.1em}}
S[table-format=1.3, table-column-width=3.1em]@{\hspace{0.1em}}
S[table-format=2.1,      table-column-width=3.1em]
S[table-format=1.3, table-column-width=3.1em]@{\hspace{0.4em}}
S[table-format=1.3, table-column-width=3.1em]@{\hspace{0.1em}}
S[table-format=2.1,      table-column-width=3.1em]
S[table-format=1.3, table-column-width=3.1em]@{\hspace{0.8em}}
S[table-format=1.3, table-column-width=3.1em]@{\hspace{0.1em}}
S[table-format=2.1,      table-column-width=3.1em]
}
\toprule
        & \multicolumn{3}{c}{No Noise} 
        & \multicolumn{3}{c}{Medium Noise} 
        & \multicolumn{3}{c}{Max Noise} \\
        \cmidrule(lr){2-4}\cmidrule(lr){5-7}\cmidrule(lr){8-10}
  Methods
  & {CD} & {CD$^2$}  & {CA ($^\circ$)} 
  & {CD} & {CD$^2$}  & {CA ($^\circ$)} 
  & {CD} & {CD$^2$}  & {CA ($^\circ$)} 
  \\
\midrule
\multicolumn{10}{l}{\textit{Supervised}} \\
\midrule[0.1pt]
Points2Surf~\cite{Erler2020Points2Surf}
&  0.589 &  1.099 & 26.8 
&  0.652 &  1.261 & 30.1
&  \textbf{1.151} &  \textbf{2.786} & 47.4
\\

POCO~\cite{Boulch:2022:POCO}
&  \textbf{0.539} &  0.646 & \textbf{20.9} 
&  0.642 &  0.932 & 25.8
&  1.385 &  4.945 & 44.7
\\

NKSR$^{\dagger}$~\cite{huang2023nksr}
&  \textbf{0.539} &  \textbf{0.519} & 21.4
&  \textbf{0.610} &  \textbf{0.642} & \textbf{24.7}
&  1.636 &  5.298 & \textbf{44.6}
\\

\midrule[0.5pt]
\multicolumn{10}{l}{\textit{Optimization-based}} \\
\midrule[0.1pt]
Neural-Pull~\cite{Ma2020NeuralPull}
& 0.641 & 1.851 & 21.1 
& 0.924 & 4.086 & 41.7 
& 1.926 & 11.114 & 44.9 
\\

IGR~\cite{gropp2020implicit}
& 0.772 & 3.092 & 23.0 
& 1.454 & 13.185 & 49.8 
& 2.553 & 27.912 & 45.7 
\\

SIREN~\cite{sitzmann2019siren} $(\mu\equal 0.033)$
& 0.602 & 1.151 & 22.1 
& 0.963 & 3.647 & 32.5 
& 2.079 & 14.268 & 44.9 
\\

SIREN~\cite{sitzmann2019siren} $(\mu\equal 0.33)$
& 0.605 & 1.525 & 19.2 
& 0.739 & 1.724 & 26.8 
& 1.658 & 7.698 & 41.1 
\\

DiffCD (ours)
& \textbf{0.518} & \textbf{0.542} & \textbf{17.5} 
& \textbf{0.690} & \textbf{1.069} & \textbf{26.6} 
& \textbf{1.416} & \textbf{4.306} & \textbf{37.4} 
\\
\bottomrule
\end{tabularx}

\label{table:famous_results}
\end{table}

\begin{figure}[t]
    \begin{subfigure}[t]{0.03\columnwidth}
        \rotatebox[origin=l]{90}{\makebox[0.5cm]{Max \qquad  Medium}}%
    \end{subfigure}%
    \hfill
    \begin{subfigure}[t]{0.19\columnwidth}
        \includegraphics[width=\columnwidth,trim={0 0 0 10mm},clip]{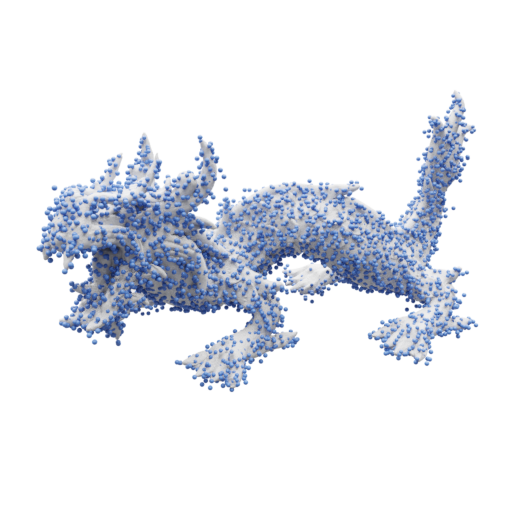}\\[-5mm]
        \includegraphics[width=\columnwidth,trim={0 30mm 0 0},clip]{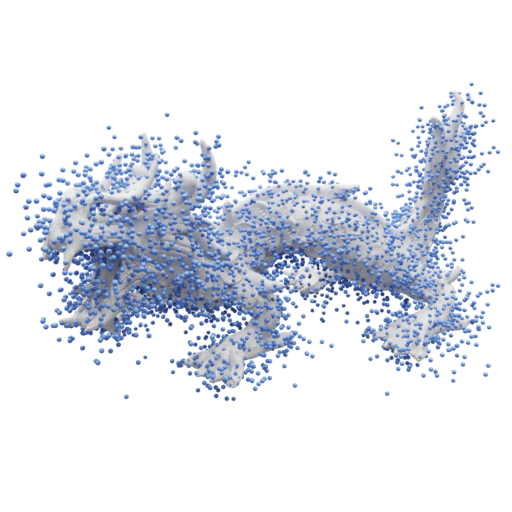}
        \caption{Ground truth}
        \label{subfig:noisy-gt}
    \end{subfigure}%
    \hfill
    \begin{subfigure}[t]{0.19\columnwidth}
        \includegraphics[width=\columnwidth,trim={0 0 0 10mm},clip]{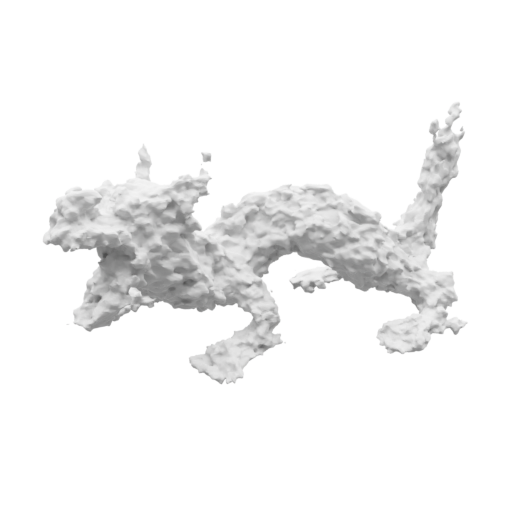}\\[-5mm]
        \includegraphics[width=\columnwidth,trim={0 30mm 0 0},clip]{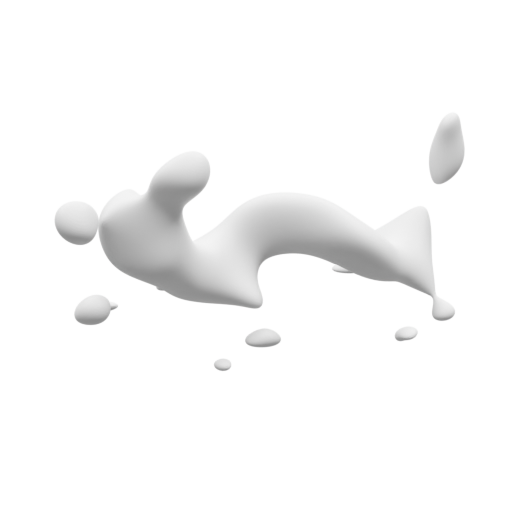}
        \caption{Neural-Pull \cite{Ma2020NeuralPull}}
        \label{subfig:noisy-neuralpull}
    \end{subfigure}%
    \hfill
    \begin{subfigure}[t]{0.19\columnwidth}
        \includegraphics[width=\columnwidth,trim={0 0 0 10mm},clip]{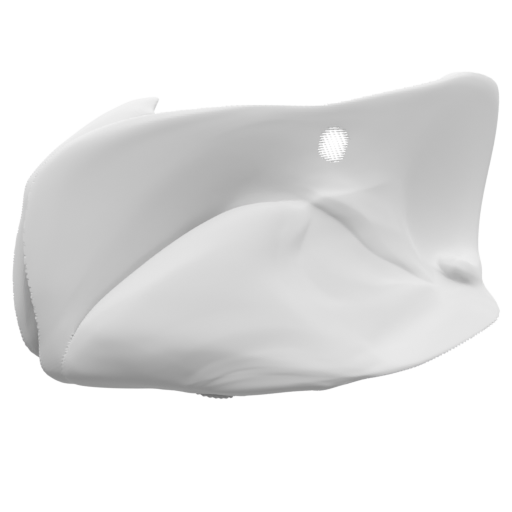}\\[-5mm]
        \includegraphics[width=\columnwidth,trim={0 30mm 0 0},clip]{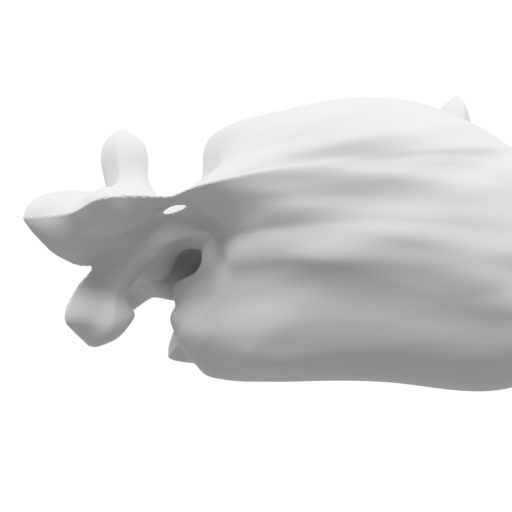}
        \caption{IGR \cite{gropp2020implicit}}
        \label{subfig:noisy-igr}
    \end{subfigure}%
    \hfill
    \begin{subfigure}[t]{0.19\columnwidth}
        \includegraphics[width=\columnwidth,trim={0 0 0 10mm},clip]{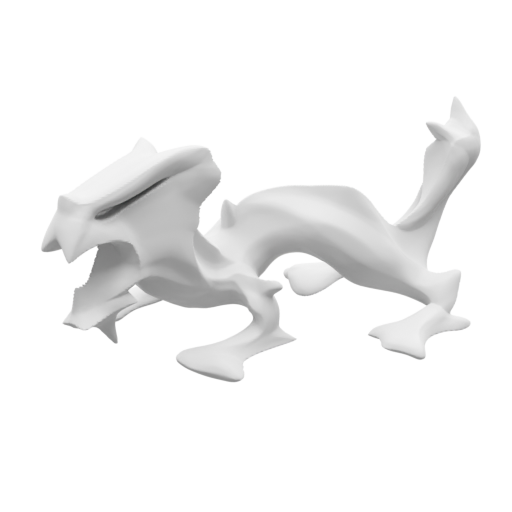}\\[-5mm]
        \includegraphics[width=\columnwidth,trim={0 30mm 0 0},clip]{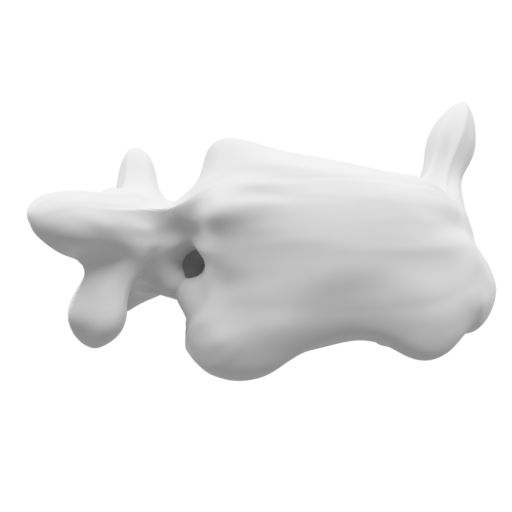}
        \caption{SIREN \cite{sitzmann2019siren}}
        \label{subfig:noisy-siren}
    \end{subfigure}%
    \hfill
    \begin{subfigure}[t]{0.19\columnwidth}
        \includegraphics[width=\columnwidth,trim={0 0 0 10mm},clip]{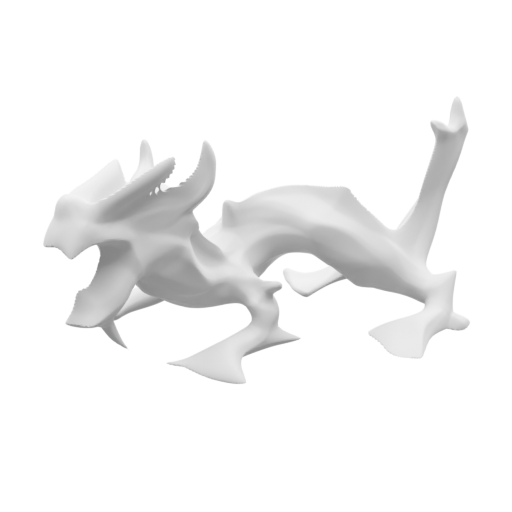}\\[-5mm]
        \includegraphics[width=\columnwidth,trim={0 30mm 0 0},clip]{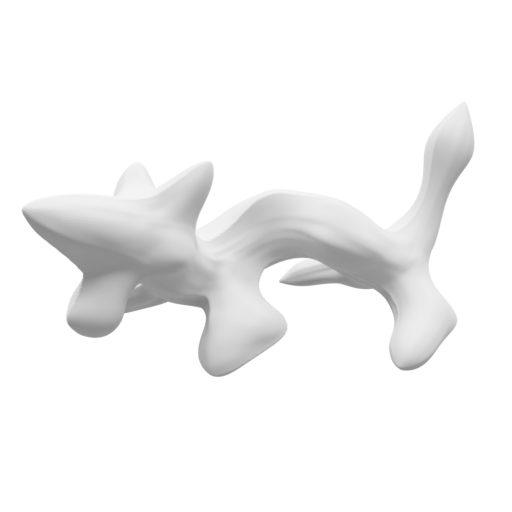}
        \caption{DiffCD (ours)}
        \label{subfig:noisy-diffcd}
    \end{subfigure}

    \caption{
        \textbf{Surface reconstruction from noisy point clouds.}
        \subref{subfig:noisy-gt}) Ground truth surface and noisy point clouds.
        \subref{subfig:noisy-neuralpull}) Neural-Pull is highly sensitive to noise, producing irregular surfaces with missing chunks.
        For \subref{subfig:noisy-igr}) IGR, the issue of spurious surfaces remains in the noisy scenarios, and
        \subref{subfig:noisy-siren}) SIREN ($\mu=0.33$) again ends up smoothing out surface details.
        \subref{subfig:noisy-diffcd}) DiffCD recovers the general shape despite the extreme noise level.
    }
    \label{fig:noisy-shapes-comparison}
\end{figure}

\begin{figure}[t]
    \centering
    \tiny
    \begin{subfigure}{0.49\linewidth}
        \begin{minipage}[t]{0.24\linewidth}
            \centering
            \includegraphics[width=\columnwidth]{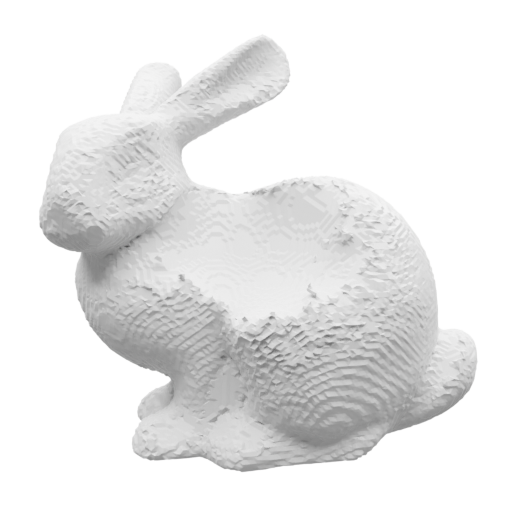}\\
            \includegraphics[width=\columnwidth]{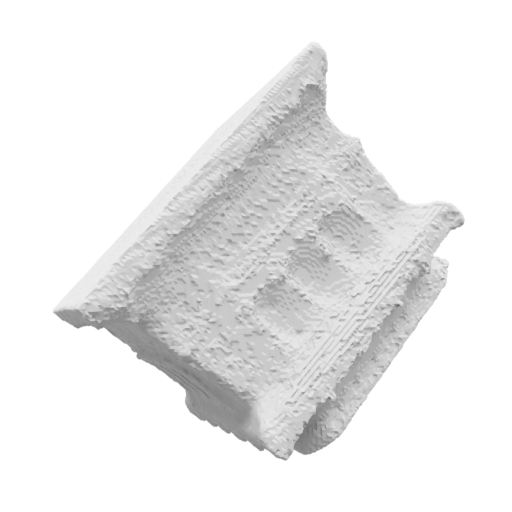}\\
            Points2Surf
        \end{minipage}%
        \begin{minipage}[t]{0.24\linewidth}
            \centering
            \includegraphics[width=\columnwidth]{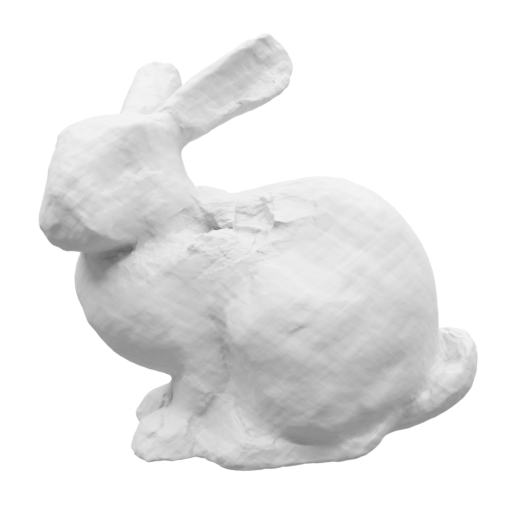}\\
            \includegraphics[width=\columnwidth]{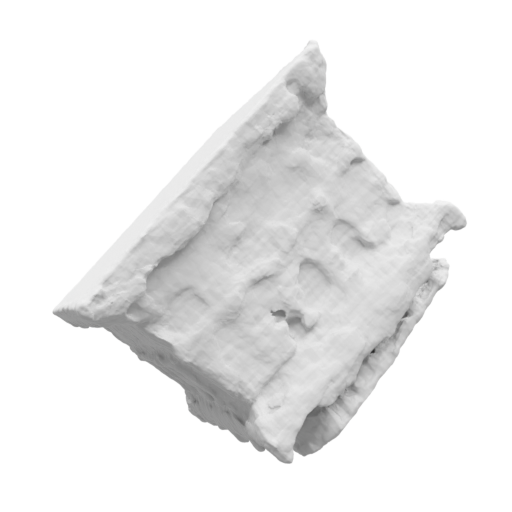}\\
            \quad POCO
        \end{minipage}%
        \begin{minipage}[t]{0.24\linewidth}
            \centering
            \includegraphics[width=\columnwidth]{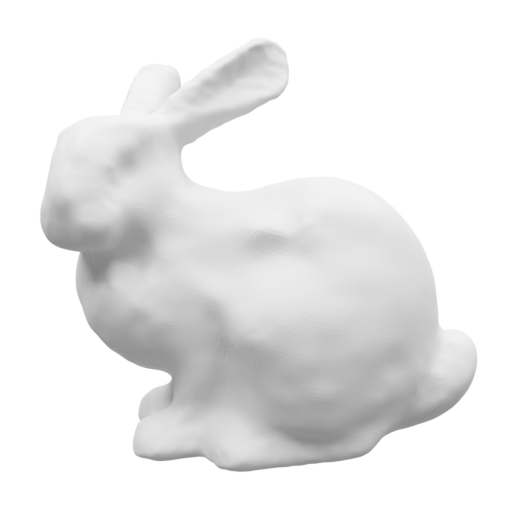}\\
            \includegraphics[width=\columnwidth]{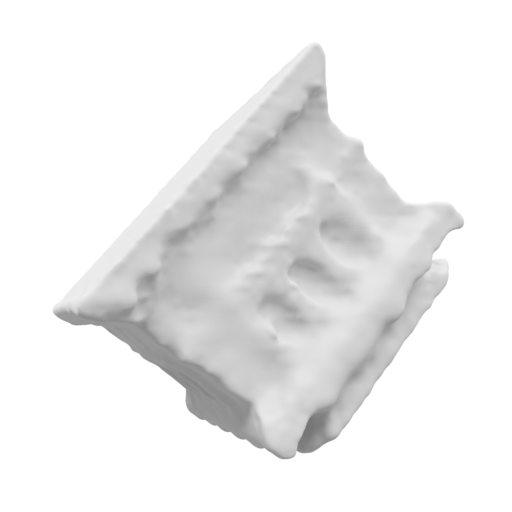}\\
            \quad NKSR$^\dagger$
        \end{minipage}%
        \begin{minipage}[t]{0.24\linewidth}
            \centering
            \includegraphics[width=\columnwidth]{figures/diffcd/bunny.png}\\
            \includegraphics[width=\columnwidth]{figures/diffcd/LibertyBase.png}\\
            \quad DiffCD
        \end{minipage}
        \caption{No noise}
    \end{subfigure}%
    \hfill
    \begin{subfigure}{0.49\linewidth}
        \begin{minipage}[t]{0.24\linewidth}
            \centering
            \includegraphics[width=\columnwidth]{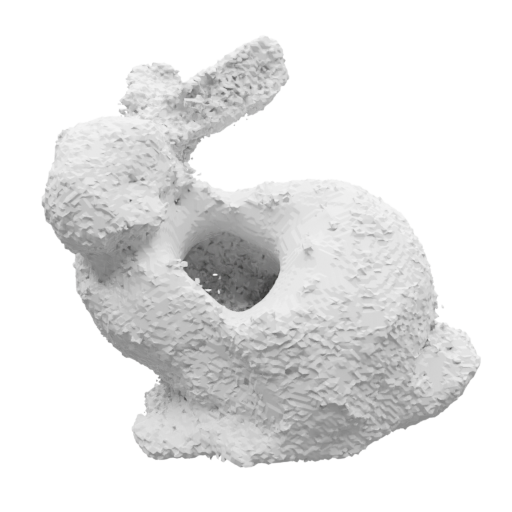}\\
            \includegraphics[width=\columnwidth]{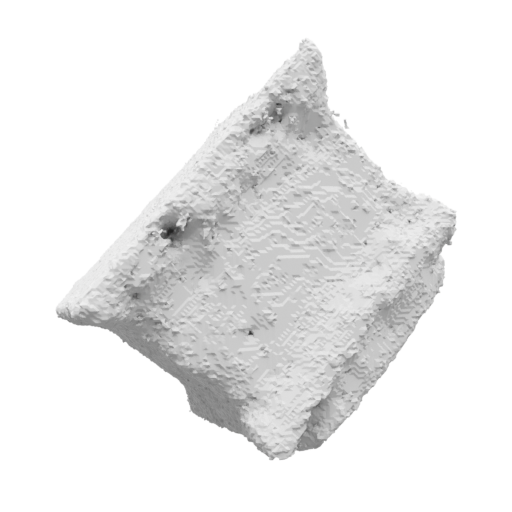}\\
            Points2Surf
        \end{minipage}%
        \begin{minipage}[t]{0.24\linewidth}
            \centering
            \includegraphics[width=\columnwidth]{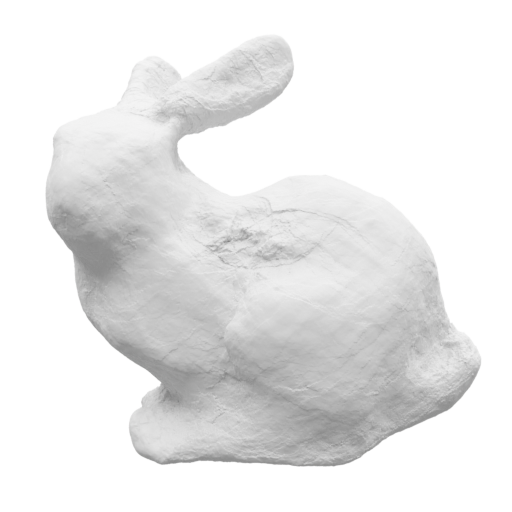}\\
            \includegraphics[width=\columnwidth]{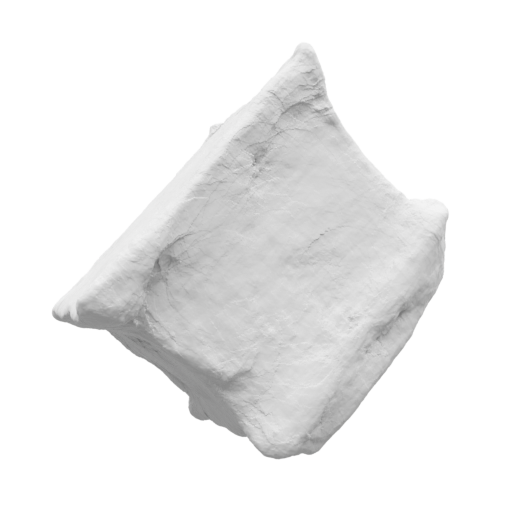}\\
            \quad POCO
        \end{minipage}%
        \begin{minipage}[t]{0.24\linewidth}
            \centering
            \includegraphics[width=\columnwidth]{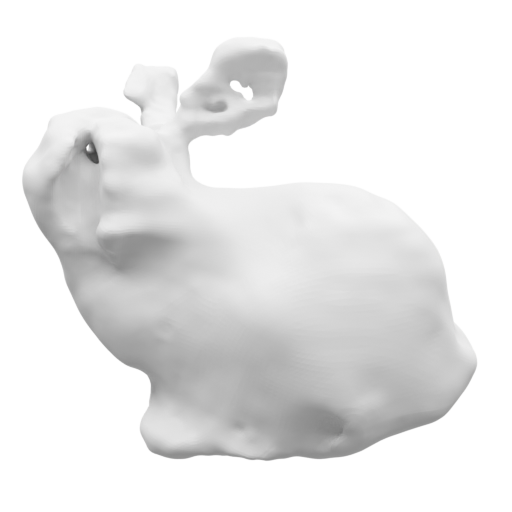}\\
            \includegraphics[width=\columnwidth]{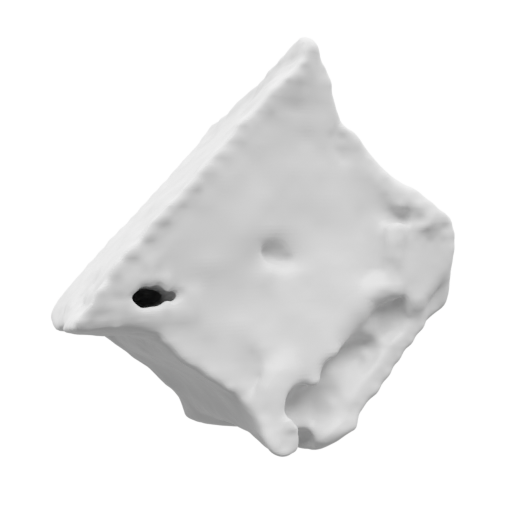}\\
            \quad NKSR$^\dagger$
        \end{minipage}%
        \begin{minipage}[t]{0.24\linewidth}
            \centering
            \includegraphics[width=\columnwidth]{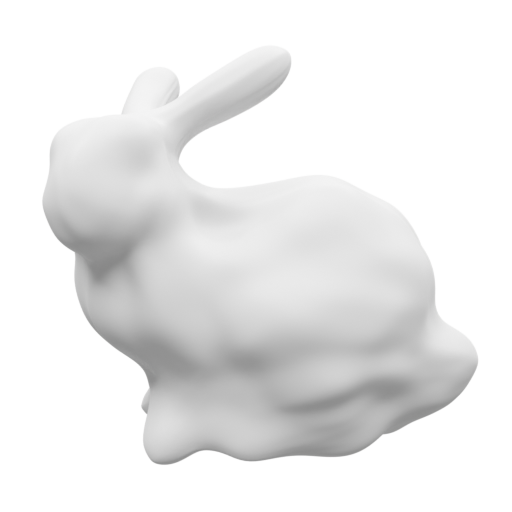}\\
            \includegraphics[width=\columnwidth]{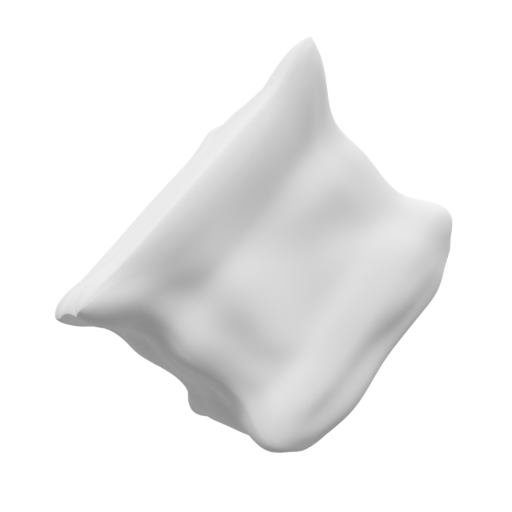}\\
            \quad DiffCD
        \end{minipage}
        \caption{Max noise}
    \end{subfigure}

    \caption{\textbf{Comparison to supervised methods.} Our method remains competitive with supervised methods. Particularly in the noise-free setting, where DiffCD recovers higher-quality surface detail.}
    \label{fig:supervised-vs-diffcd}
\end{figure}

\begin{table}[t]
\begin{minipage}[t]{0.44\textwidth}
\vspace{0pt}
\centering
\scriptsize
\sisetup{detect-weight=true,detect-inline-weight=math}
\setlength\tabcolsep{4pt}
\centering
\caption{\textbf{Reconstruction accuracy on Thingi10k.} Lower is better. The best performing method in each category is marked in bold.}
\label{table:thingi10k_results}
\begin{tabularx}{\linewidth}{
X
S[table-format=1.3, table-column-width=3.1em]@{\hspace{0.1em}}
S[table-format=1.3, table-column-width=3.1em]@{\hspace{0.1em}}
S[table-format=2.1,      table-column-width=3.1em]
S[table-format=1.3, table-column-width=3.1em]@{\hspace{0.4em}}
S[table-format=1.3, table-column-width=3.1em]@{\hspace{0.1em}}
S[table-format=2.1,      table-column-width=3.1em]
S[table-format=1.3, table-column-width=3.1em]@{\hspace{0.8em}}
S[table-format=1.3, table-column-width=3.1em]@{\hspace{0.1em}}
S[table-format=2.1,      table-column-width=3.1em]
}
\toprule
  Mehtod
  & {CD} & {CD$^2$}  & {CA ($^\circ$)} 
  \\
\midrule
\multicolumn{4}{l}{\textit{Supervised}} \\
\midrule[0.1pt]
Points2Surf
&  0.519 &  0.639 & 19.5
\\

POCO
&  0.534 &  1.100 & 14.4
\\

NKSR$^{\dagger}$
&  \textbf{0.497} &  \textbf{0.463} & \textbf{13.0}
\\
\midrule[0.5pt]
\multicolumn{4}{l}{\textit{Optimization-based}} \\
\midrule[0.1pt]
Neural-Pull
& 0.688 & 2.604 & 15.9 
\\

IGR
& 0.868 & 7.018 & 16.4
\\

SIREN $\mu\equal 0.033$
& 0.564 & 1.042 & 14.2 
\\

SIREN $\mu\equal 0.33$
& 0.538 & 0.894 & \textbf{12.7} 
\\

DiffCD (ours)
& \textbf{0.500} & \textbf{0.580} & 13.0 
\\
\bottomrule
\end{tabularx}
\end{minipage}%
\hfill
\begin{minipage}[t]{0.53\textwidth}
\vspace{-0.5em}
\centering
\begin{subfigure}{0.33\columnwidth}
    \includegraphics[width=\columnwidth]{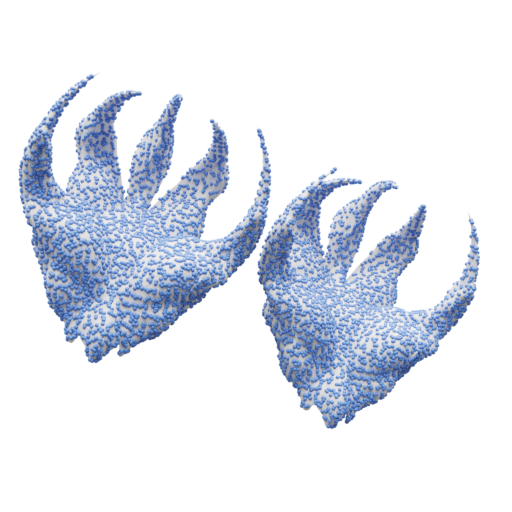}
    \caption{Ground truth}
\end{subfigure}%
\begin{subfigure}{0.33\columnwidth}
    \includegraphics[width=\columnwidth]{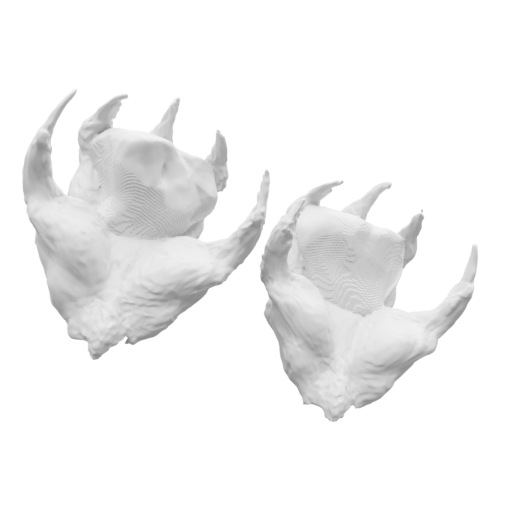}
    \caption{Neural-Pull}
\end{subfigure}%
\begin{subfigure}{0.33\columnwidth}
    \includegraphics[width=\columnwidth]{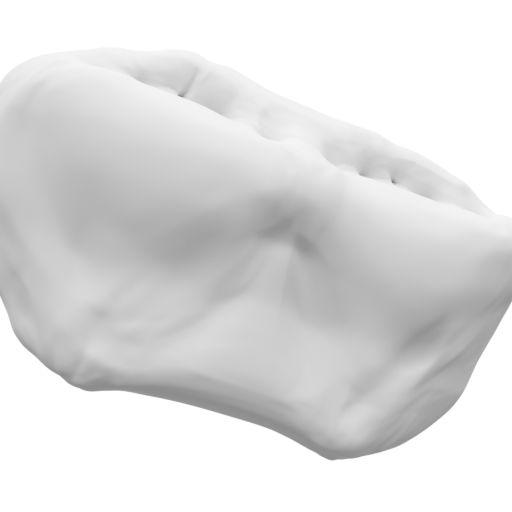}
    \caption{IGR}
\end{subfigure}\\
\begin{subfigure}{0.33\columnwidth}
    \includegraphics[width=\columnwidth]{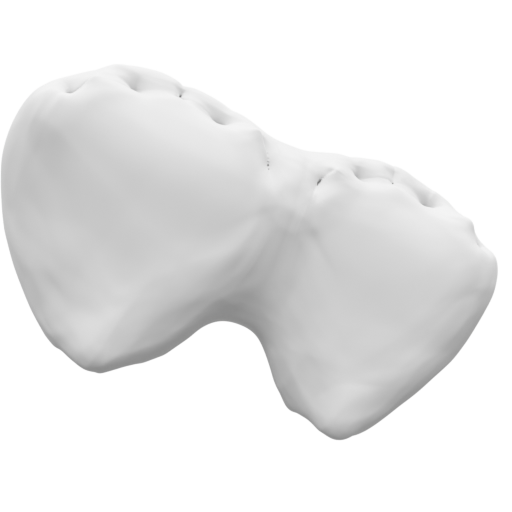}
    \caption{Neural-Pull}
\end{subfigure}%
\begin{subfigure}{0.33\columnwidth}
    \includegraphics[width=\columnwidth]{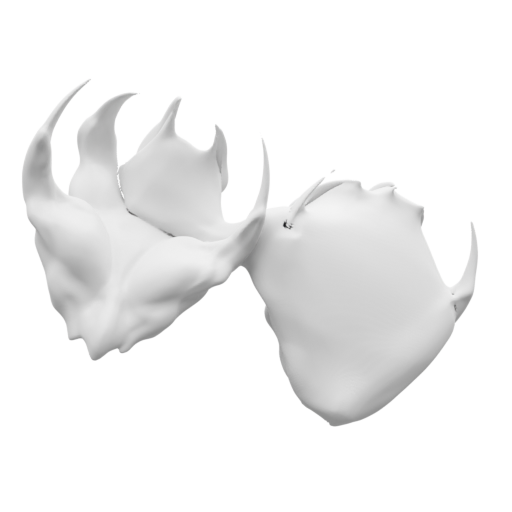}
    \caption{IGR}
\end{subfigure}%
\begin{subfigure}{0.33\columnwidth}
    \includegraphics[width=\columnwidth]{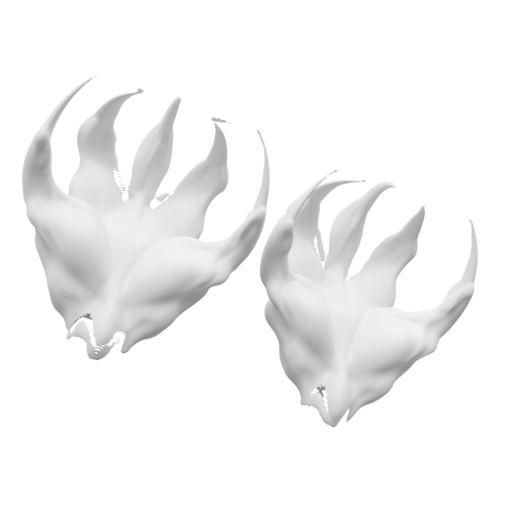}
    \caption{DiffCD (ours)}
\end{subfigure}
\captionof{figure}{\textbf{Thingis10k -- qualitative results.}}
\label{fig:thingi10k_qual}

\end{minipage}

\end{table}

We test our method on the FAMOUS dataset \cite{Erler2020Points2Surf}.
The dataset consists of 22 ground-truth meshes representing the true surfaces $\surface$, each with three simulated point cloud scans: ``No Noise'', ``Medium Noise'' and ``Max Noise''. 
The point density varies significantly across each shape due to the realistic scanning procedure.
Several shapes also have chunks of missing points, such as on the bunny in \cref{fig:spurious-igr}. 
Additionally, we test our method on the ``No Noise'' variant of the larger Thingi10k test set, also prepared by \citet{Erler2020Points2Surf}. It consists of 100 point clouds obtained with the same scanning procedure.
On average, we find that DiffCD takes roughly 6$\times$ longer for the same number of iterations compared to IGR \cite{gropp2020implicit}, which is the fastest optimization-based method.
The most time-consuming part is the sampling of $\surface_\theta$; improving this aspect is a promising direction for future work.

\inparagraph{Metrics.}
After fitting each neural implicit surface, we extract a mesh using marching cubes \cite{lewiner2003efficientMarchingCubes} with a resolution of 512 and evaluate the following shape metrics:
\textit{(i)} Chamfer distance (CD), as defined in \cref{eqn:p2p}.
We use 30K sample points from both $\surface$ and $\surface_\theta$.
\textit{(ii)} Chamfer squared distance (CD$^2$), which is similar to the Chamfer distance, except for computing the average \textit{squared} distances from $\pointcloud_1$ to $\pointcloud_2$ and vice versa. In contrast to CD, CD$^2$ is more sensitive to outlier surfaces.
\textit{(iii)} Chamfer angle (CA) is the average angle in degrees between the surface normal at each sample point in $\surface$ and its closest point in $\surface_\theta$, and vice versa. In order to take surface orientation into account, we compute CA using $\surface$ with and without flipped normals, and pick the one yielding the smaller value.
We scale CD and CD$^2$ up by a factor of 100 and 100$^2$ each for readability.

\inparagraph{Results.}
We present our main results on the FAMOUS dataset in \cref{table:famous_results}.
On this dataset, DiffCD decisively outperforms the optimization-based baselines \emph{on all metrics and in all settings of the noise level}.
We highlight 4 reconstructed shapes in the ``No Noise'' setting in \cref{fig:spurious-igr}. While IGR \cite{gropp2020implicit} can obtain great surface detail in parts, its performance is significantly deteriorated by spurious surfaces in several shapes. SIREN \cite{sitzmann2019siren} mitigates the spurious surfaces to some extent but tends to either under-smooth, leaving some spurious surfaces behind (third row), or over-smooth, erasing surface details (remaining rows).
In contrast, DiffCD cleanly removes all spurious surfaces without sacrificing surface detail.

Notably, our method outperforms the supervised methods Points2Surf and POCO in the noise-free setting and compares favorably with NKSR on CD and CA.
Despite being trained on point clouds from the same scanning procedure, we find that supervised methods tend to suffer from a lack of surface detail, as demonstrated in \cref{fig:supervised-vs-diffcd}.
DiffCD also remains competitive in the ``Medium'' and ``Max'' noise settings, where other optimization-based methods again struggle with spurious surfaces, as can be seen in \cref{fig:noisy-shapes-comparison}.

Finally, \cref{table:thingi10k_results} shows the results on Thingi10k and example reconstruction in \cref{fig:thingi10k_qual}.
We find that there are generally less spurious surfaces in this dataset, so the difference between the methods is smaller.
Nevertheless, our method obtains a $7.2\%$ and $42.6\%$ relative improvement in CD and CD$^2$ over the next-best performing method (SIREN with $\mu\equal0.33$), at a little cost of $2.3\%$ relative increase in CA.

\subsection{Smoothing effect of $\SSAloss^\alpha$}
\label{sec:ssa-smooting}
\begin{figure}[t]%
    \hspace{.005\columnwidth}
    \begin{overpic}[width=.99\columnwidth]{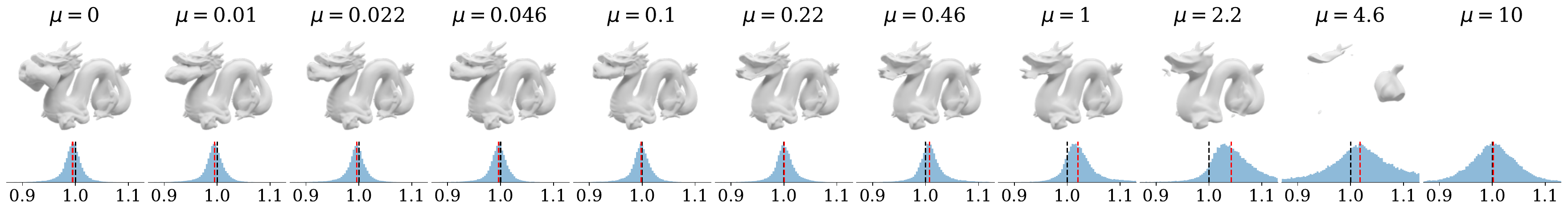}
        \put(-0.2, 8){\rotatebox{90}{\makebox(0,0){\scalebox{0.5}{Dragon}}}}
        \put(-0.2, 3.5){\rotatebox{90}{\makebox(0,0){\scalebox{0.5}{$\|g\|$}}}}
    \end{overpic}

    \hspace{.005\columnwidth}
    \begin{overpic}[width=.99\columnwidth,]{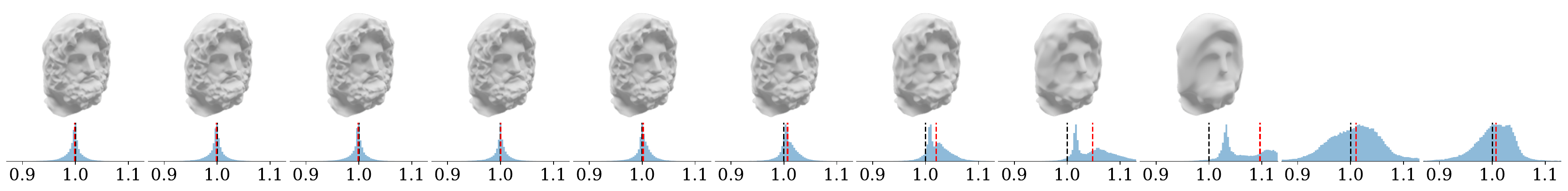}
        \put(-0.2, 8){\rotatebox{90}{\makebox(0,0){\scalebox{0.5}{Serapis}}}}
        \put(-0.2, 3.5){\rotatebox{90}{\makebox(0,0){\scalebox{0.5}{$\|g\|$}}}}
    \end{overpic}
    \vspace{-.7cm}
    \caption{
    Illustration of shapes reconstructed using the SIREN loss for various SSA weight factors $\mu$.
    \textbf{Odd rows:} Reconstructed shapes. \textbf{Even rows:} gradient norm histograms from points sampled near $\pointcloud$. The median gradient norm is marked in red.
    Larger $\mu$ tends to shrink the surface and smooth out details, while also increasing the gradient norm near the surface. If $\mu$ is too large, the shape disappears completely.
    }
    \label{fig:mu-grid}
\end{figure}

\begin{figure}[t]%
    \begin{overpic}[width=\columnwidth]{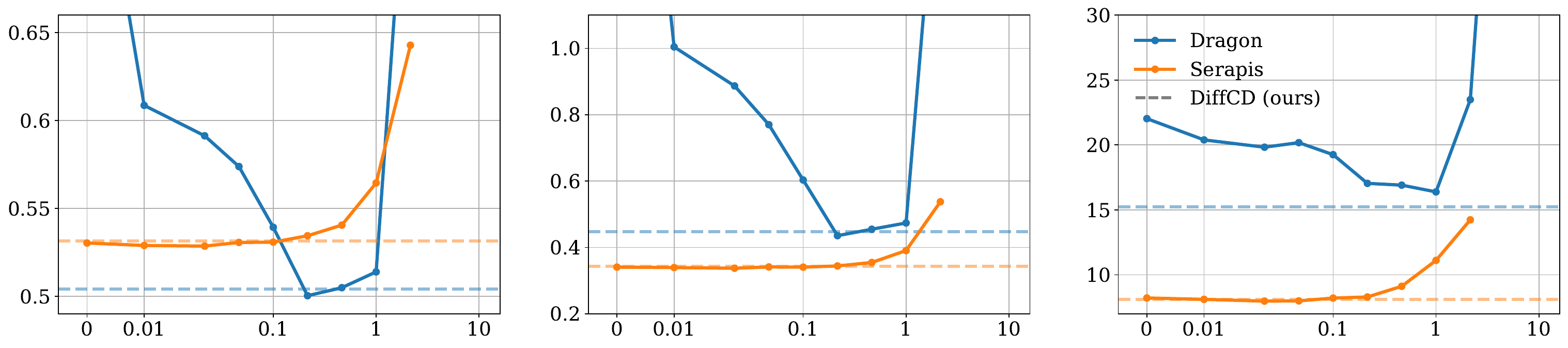}
        \put(16,22){CD}
        \put(16,-1){$\mu$}
        \put(49.5,22){CD$^2$}
        \put(49.5,-1){$\mu$}
        \put(83.5,22){CA}
        \put(83.5,-1){$\mu$}
    \end{overpic}
    \vspace{-0.5cm}
    \caption{
    Shape metrics for SIREN \cite{sitzmann2019siren} from the reconstructions in \cref{fig:mu-grid}. Metrics obtained by DiffCD are marked as dashed lines. While it is possible to carefully tune the weighting factor $\mu$ for each shape separately to obtain similar metrics as DiffCD, the optimal value differs significantly between the two shapes.
    }
    \label{fig:mu-plot}
\end{figure}
\label{sec:ssa-smoothing}
As we have shown in \cref{sec:ssa-surface-area-regularizer}, the off-surface term $\SSAloss^\alpha$ from SIREN \cite{sitzmann2019siren} approximates the area of the implicit surface, $|\surface_\theta|$.
This means that it will regularize the surface area.
To illustrate this effect in practice, we run SIREN \cite{sitzmann2019siren} for a range of weight parameters $\mu$ between 0 and 10. \cref{fig:mu-grid} shows the results for Dragon and Serapis shapes from the FAMOUS \cite{Erler2020Points2Surf} dataset.
For small $\mu$, the SIREN loss is equivalent to IGR and
the Dragon shape has a large spurious surface in the mouth area.
As we increase $\mu$, the spurious surface gradually shrinks.
However, in the Serapis shape, the details also gradually diminish.
\cref{fig:mu-plot} quantifies this trade-off. Reducing the error in the Dragon shape indeed comes at the cost of increasing the error in the Serapis shape.
In contrast, DiffCD (marked as a dashed line in \cref{fig:mu-plot}) achieves a nearly ideal performance for both shapes without any shape-specific tuning.

Another consequence of \cref{thm:ssa} is that $\SSAloss^\alpha$ can be minimized not only by reducing the surface area, but also by \textit{maximizing} the gradient norm $\|g(\theta, x)\|$ on the surface.
This has the undesirable side-effect of biasing the gradient norm of $f(\theta, \argdot)$ to be greater than 1 near the surface. 
This bias remains even in the limit of an infinitely large $\alpha$.
The histograms in \cref{fig:mu-grid} demonstrate that the bias has a measurable effect on the gradient norms.
For larger values of $\mu$, the median gradient norm increases until the zero-level set disappears from $\Omega$.

\section{Conclusion}
We presented DiffCD, a method for fitting neural implicit surfaces to sparse and noisy point clouds.
We analyzed existing methods and showed that they tend to minimize only one side of the Chamfer distance, which leaves a blind spot for spurious surfaces.
We also showed that the SIREN approach to mitigating spurious surfaces comes at the cost of regularizing the surface area.
By instead minimizing the full symmetric Chamfer distance, our method explicitly mitigates spurious surfaces without being coupled with additional regularization.

\inparagraph{Limitations and future work.}
While DiffCD can be tuned to work well across various noise levels, it does not automatically adapt in cases where the noise level is unknown or varies significantly across the shape.
This stands in contrast to supervised methods, which can handle multiple noise levels using a single model, provided the training data is available.
A promising future direction would be to extend optimization-based methods with learned surface priors, which could help in guiding the optimization process.
We also hope that our analysis of the SIREN \cite{sitzmann2019siren} loss term in \cref{sec:ssa-surface-area-regularizer} can be extended to other kinds of sampling-based surface integrals for implicit surfaces.

\inparagraph{Acknowledgements.} This work was supported by the ERC Advanced Grant SIMULACRON.

\clearpage

%
%
\bibliographystyle{splncs04nat}
\bibliography{main}

\titlerunning{DiffCD: A Symmetric Chamfer Distance for Neural Implicit Surfaces}

\author{Linus Härenstam-Nielsen\orcidlink{0000-0001-6863-4438} \quad
Lu Sang\orcidlink{0009-0007-1158-5584} \quad Abhishek Saroha\\[1mm]
Nikita Araslanov\orcidlink{0000-0002-9424-8837} \quad Daniel Cremers\orcidlink{0000-0002-3079-7984}}

\authorrunning{Härenstam-Nielsen et al.}

\institute{TU Munich \\[1mm]
Munich Center for Machine Learning
}

\title{DiffCD: A Symmetric Differentiable Chamfer Distance for Neural Implicit Surface Fitting\\[2mm]\large -- Supplemental Material --} 
\maketitle

\appendix

\noindent This appendix provides additional material related to our proposed surface reconstruction method, \ourshort, and the experiments. 
In \cref{sec:additional-details}, we expand on the implementation details of DiffCD and our reimplementation of the baseline methods.
In \cref{sec:proof}, we provide the full proof of \cref{thm:ssa}.
In \cref{sec:siren-sample-approximation}, we analyze the impact of $\alpha$ on the sample approximation $\sampleSSAloss^\alpha$.
In \cref{sec:toy-example}, we provide a toy example where the minimizer of the SIREN \cite{sitzmann2019siren} loss has a closed-form solution.
In \cref{sec:eikonal-smoothing}, we provide additional experiments, demonstrating that increasing the eikonal weight results in smoother surfaces.
We also provide additional results showing that all methods -- \neupull, IGR, SIREN and our \ourshort –– benefit from the choice of the parameters used in \cref{sec:experiments}.
Finally, \cref{sec:neural-pull} expands on the comparison between our method and Neural-Pull with a 2D example.

\section{Additional implementation details}
\label{sec:additional-details}
\inparagraph{Network architecture and training.} We use the same coordinate MLP $f(\theta, \argdot)$ for all loss functions, consisting of 8 fully connected layers with 256-dimensional features and softplus activations, using geometric initialization as introduced by \citet{gropp2020implicit}. We also use a skip connection from the input to the 4th layer.
We train each method for 40K iterations of Adam \cite{kingma2014adam} with a learning rate of $10^{-3}$ for the ``No Noise'' and ``Medium Noise'' settings, and $5\!\times\!10^{-5}$ for the ``Max Noise'' setting. We use cosine annealing and a linear warm-up spanning the first 1000 iterations.

\inparagraph{Batch sampling.} Each training batch consists of 5K points $\tilde x_i$ sampled uniformly from the input point cloud $\pointcloud$. These points are used to compute the points-to-surface loss.
For DiffCD, we sample an additional 5K surface points $x_i(\theta)$ from $\surface_\theta$ following our sampling scheme (\cf \cref{sec:diffcd}).
For the eikonal loss, we use a mixed sampling scheme similar to \citet{gropp2020implicit}.
Specifically, we sample 625 ``global'' points uniformly from $\Omega$, along with 5K ``local'' points obtained from a normal distribution centered at each $\tilde x_i$.
The standard deviation of each normal distribution is determined by the distance from $\tilde x_i$ to its $50^\text{th}$ nearest neighbor in $\pointcloud$.
Inspired by the implementation of \citet{Ma2020NeuralPull}, we scale each standard deviation by a factor of $0.2$, as we find that it tends to improve surface detail.
For \neupull \cite{Ma2020NeuralPull}, we only use the 5K local sample points as in the original implementation.
For computing $\sampleSSAloss^\alpha$, we sample 5K points uniformly from $\Omega$ following the original paper.

\inparagraph{Normalization.} Before training, we follow \citet{Ma2020NeuralPull} and compute a \texttt{XYZ}-aligned bounding box around each point cloud, and normalize it by subtracting the box center and scaling by the maximum side length.
As a result, the normalized point cloud fits into a $[-0.5, 0.5]^3$ bounding box.
We de-normalize all shapes to their original size and position before computing metrics.

\section{Proof of Theorem 1}
\label{sec:proof}

\begin{proof}
Note that, in the limit of large $\alpha$, the function $\frac{\alpha}{2}e^{-\alpha|\scalar|}$ approaches the \textit{Dirac delta distribution}\footnote{The delta distribution is an idealized function, defined such that $\delta(\scalar) = 0$ for $\scalar \neq 0$, and $\int_{-\infty}^\infty\delta(\scalar)\diff\scalar = 1$.
For an introduction, see \eg \citet[Chapter 1]{ARFKEN20131}.}
$\delta(\scalar)$. 
Using this fact, we can take the limit of \cref{eqn:siren-term} as  $\alpha\rightarrow\infty$ to obtain:
\eq{
\label{eqn:ssa-inf}
    \SSAloss^\infty(\theta) 
    =
    \lim_{\alpha\rightarrow\infty}\SSAloss^\alpha(\theta)
    = 
    \int_{\Omega}\delta(f(\theta, x))\diff x.
}
At this point, the theorem follows directly from \eg \citet[Theorem 6.1.5]{Hörmander2003}. Nevertheless, we will provide a proof in our notation with the purpose of providing a geometric interpretation.
The main idea is to decompose the volume integral over $\Omega$ into a surface integral over $\surface_\theta$ and a line integral over a short segment along the surface normal $\hat g(\theta, x) = g(\theta, x) / \|g(\theta, x)\|$:
\eq{
\label{eqn:SSA-inf}
    \int_\Omega\delta(f(\theta, x))\diff x 
    &=
    \int_{x\in\surface_\theta}\int_{-\epsilon}^\epsilon\delta(f(\theta, x + \scalar\hat g(\theta, x)))\diff\scalar\diff S.
}
This decomposition is valid since the surface normal is orthogonal to the surface element $\diff S$ everywhere on the surface.
The particular value of $\epsilon$ does not affect the result as long as it is chosen small enough to ensure there are no additional 0-level crossings of $f(x + \tau\hat g(\theta, x))$ on the interval $\tau\in [-\epsilon, \epsilon]$.

We then use a property of the Dirac delta, namely that $\delta(h(\scalar)) = \delta(\scalar)/h'(0)$ for any function $h:\mathbb{R}\rightarrow\mathbb{R}$ whose only zero occurs at $\scalar = 0$. In our case, we set $h_x(\tau) = f(\theta, x + \scalar\hat g(\theta, x))$, which satisfies $h_x(0) = f(\theta, x) = 0$ and $h_x'(0) \equal \|g(\theta, x)\|$, for all $x$ on $\surface_\theta$.
We can then rewrite \cref{eqn:SSA-inf} in terms of $h_x$ to get:
\eq{
    \SSAloss^\infty(\theta)
    &=\int_\Omega\delta(f(\theta, x))\diff x 
\\  &=\int_{x\in\surface_\theta}\int_{-\epsilon}^\epsilon h_x(\tau)\diff\scalar\diff S
\\  &=\int_{x\in\surface_\theta}\int_{-\epsilon}^\epsilon\frac{\delta(\scalar)}{\|g(\theta, x)\|}\diff\scalar\diff S
\\  &=\int_{x\in\surface_\theta}\int_{-\epsilon}^\epsilon\delta(t)\diff t\frac{1}{\|g(\theta, x)\|}\diff S
\\  &=\int_{x\in\surface_\theta}\frac{1}{\|g(\theta, x)\|}\diff S,
}
which completes the first part of the proof. If $f(\theta, x)$ is an SDF, we additionally have $\|g(\theta, x)\| = 1$ for all $x\in\surface_\theta$, so:
\eq{
    \SSAloss^\infty(\theta)
    =\int_{x\in\surface_\theta}\diff S
    = |\surface_\theta|,
}
which completes the second part of the proof. \qed
\end{proof}

\section{Siren loss sample approximation}
\label{sec:siren-sample-approximation}
\begin{figure}[t]
    \newcommand{\figheight}{.27\columnwidth}
    \newcommand{\histratio}{777 / 669}
    \hspace{0.05cm}
    \begin{overpic}[
        height=\figheight, trim={.3cm 1cm 0 1cm}, clip,
    ]{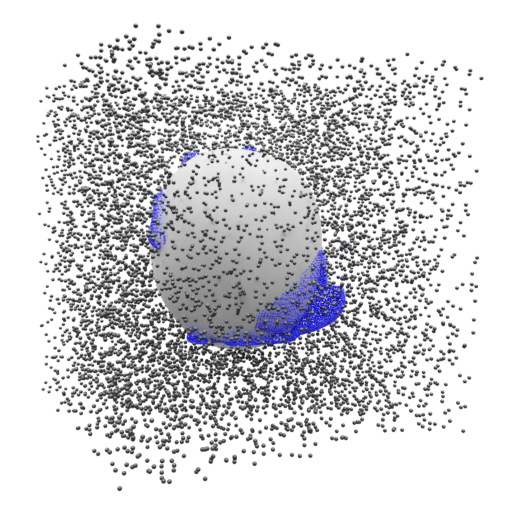}
        \put(0, 50){\makebox(0,0){\rotatebox{90}{Initial $S_\theta$}}}
    \end{overpic}
    \hspace{-0.3cm}
    \begin{overpic}[
        height=\figheight,
    ]{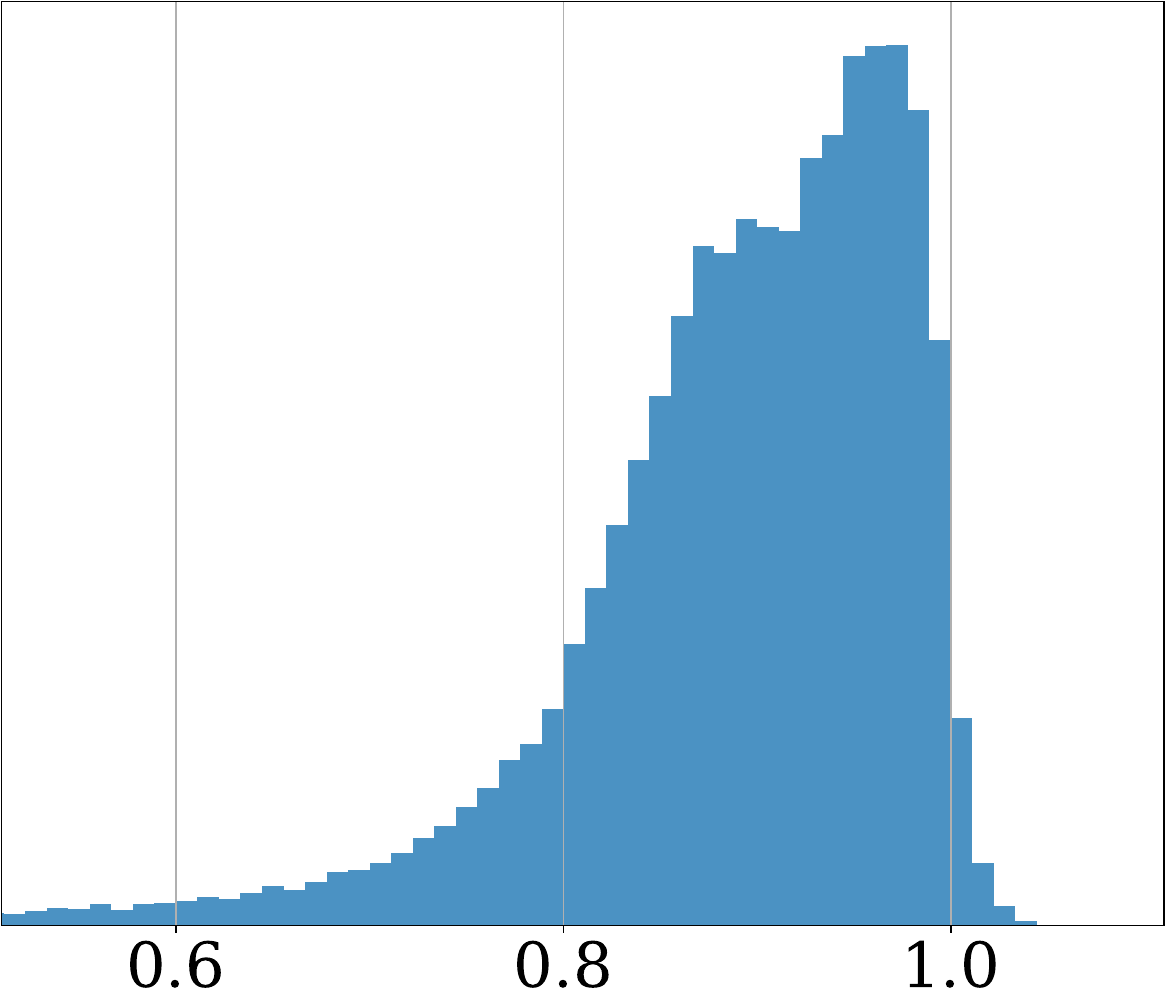}
        \put(50, -5){\makebox(0,0){$\|\grad f(\theta, x_i)\|$}}
    \end{overpic}
    \hspace{0.3cm}
    \begin{overpic}[
        height=\figheight,
    ]{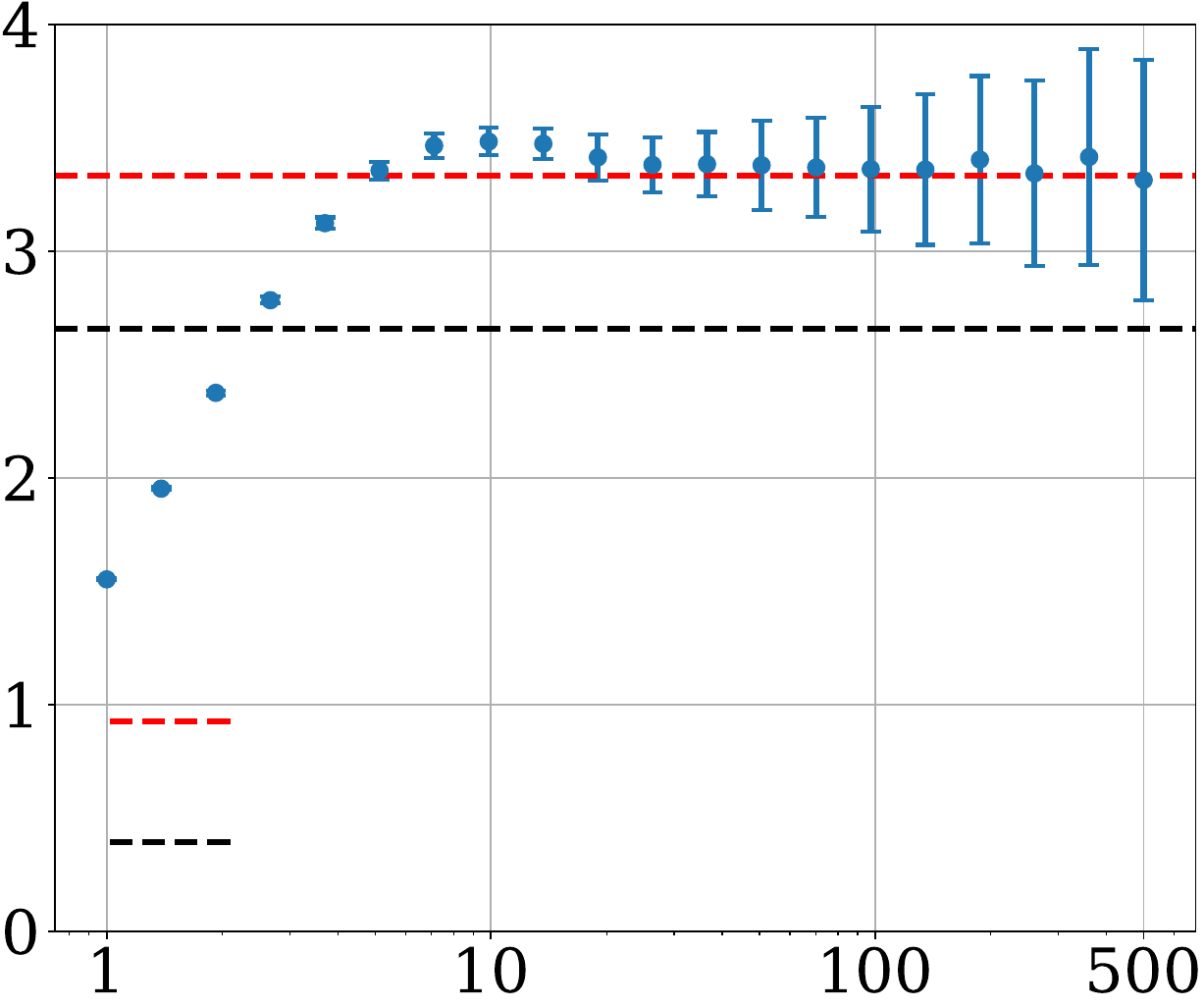}
        \put(21, 22){\scalebox{.8}{$\int_{\surface_\theta}\!\!\frac{1}{\|g\|}dS$}}
        \put(21, 12){\scalebox{.8}{$|\surface_\theta|$}}
        \put(-9, 30){\rotatebox{90}{$\SSAloss^\alpha$}}
    \end{overpic}

    \hspace{0.05cm}
    \begin{overpic}[
        height=\figheight, trim={.3cm 1cm 0 1cm}, clip
    ]{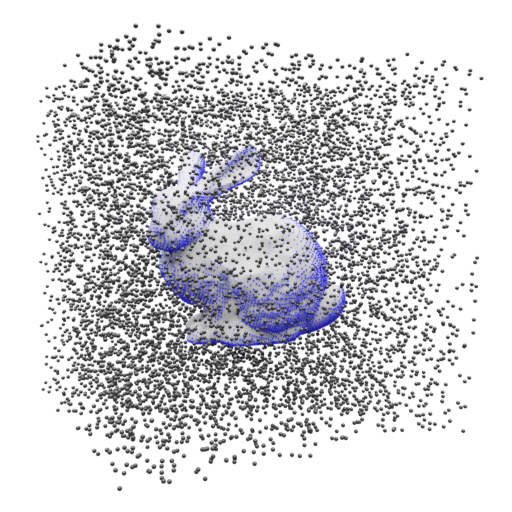}
        \put(-0, 50){\makebox(0,0){\rotatebox{90}{Final $S_\theta$}}}
        \put(50, -6){\makebox(0,0){$x_i$}}
    \end{overpic}    
    \hspace{-0.3cm}
    \begin{overpic}[
        height=\figheight,
    ]{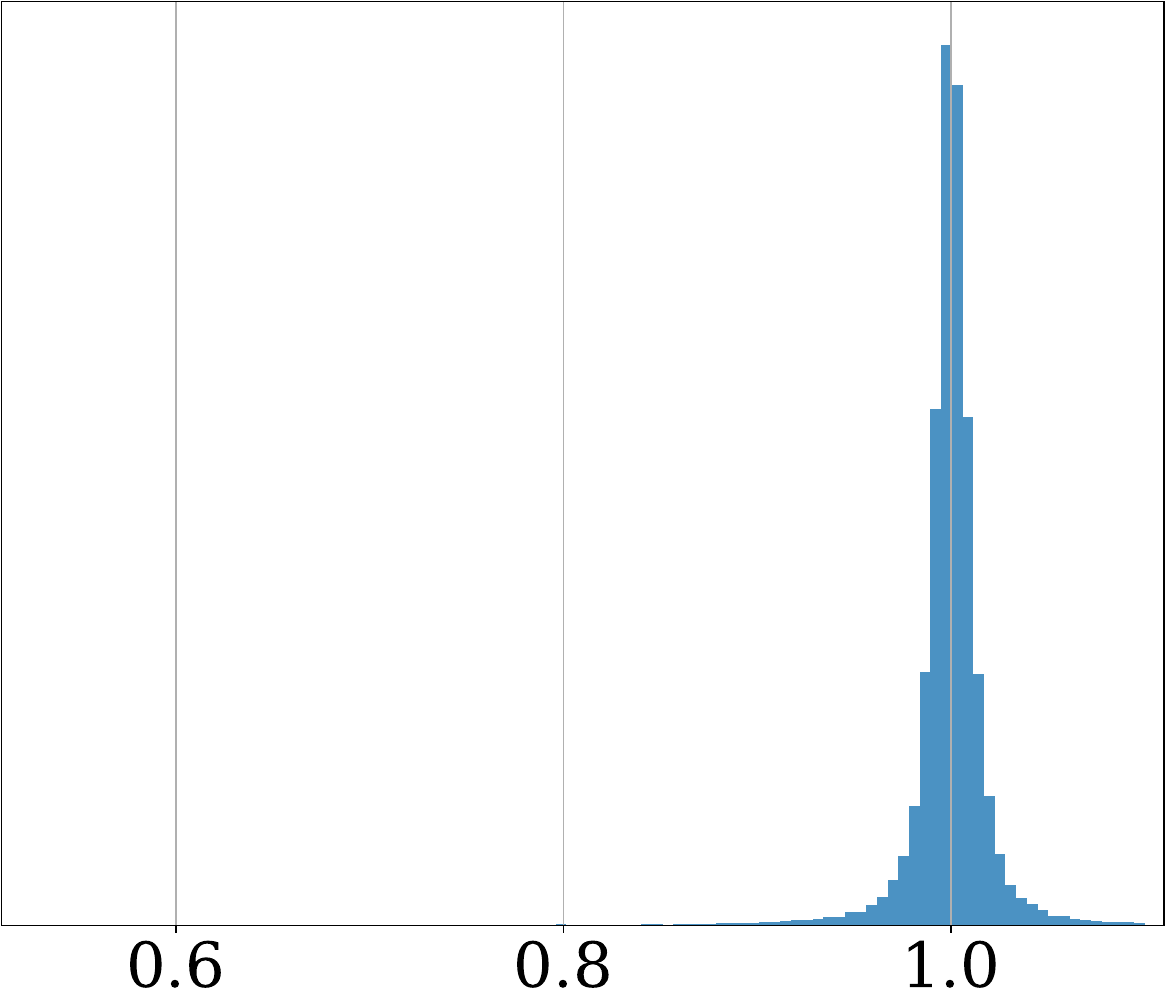}
        \put(50, -6){\makebox(0,0){$\|g(\theta, x_i)\|$}}
    \end{overpic}
    \hspace{0.3cm}
    \begin{overpic}[
        height=\figheight,
    ]{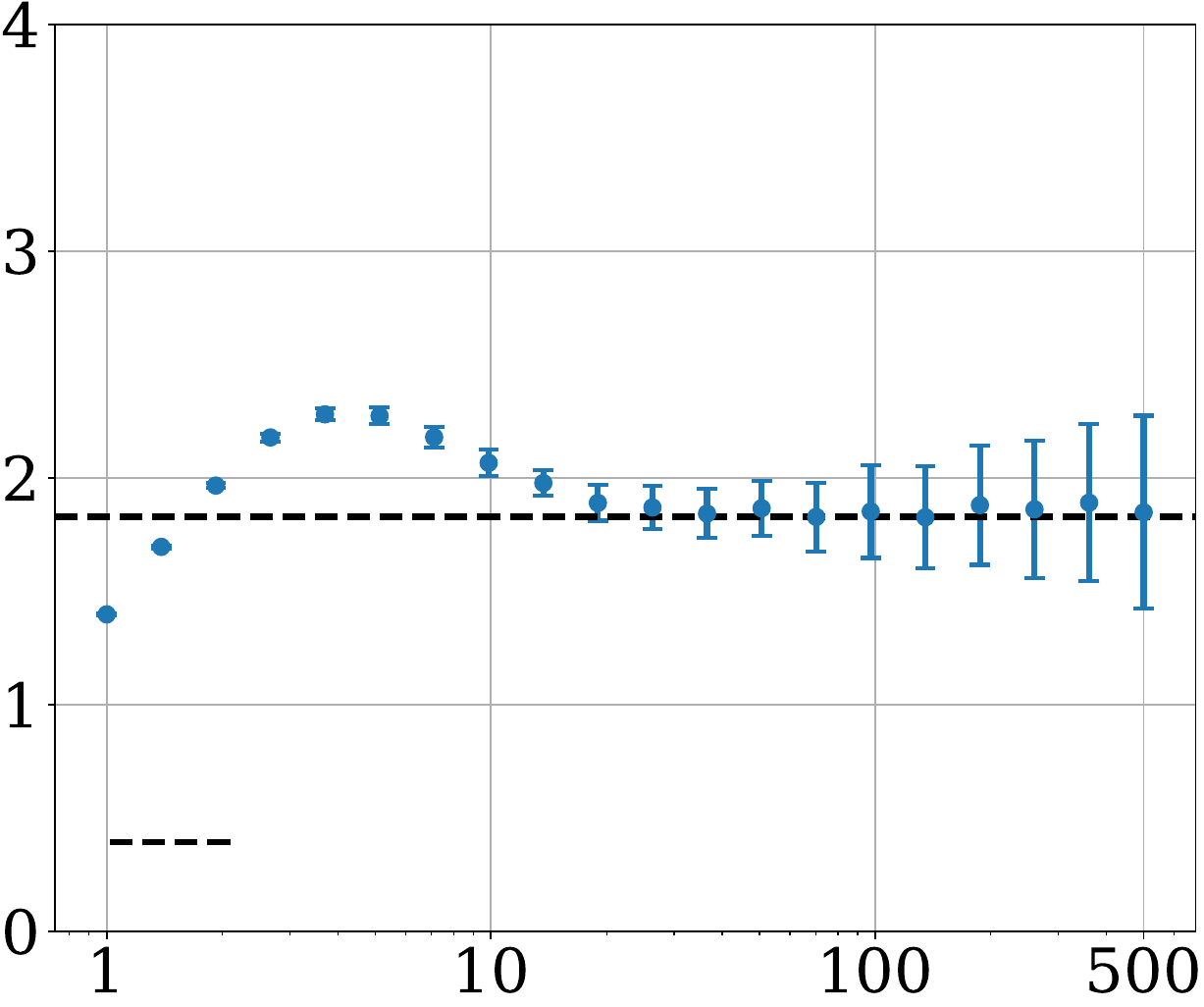}
        \put(55, -2){\makebox(0,0){$\alpha$}}
        \put(23, 12){\scalebox{.8}{$|\surface_\theta|$}}
        \put(-9, 30){\rotatebox{90}{$\SSAloss^\alpha$}}
    \end{overpic}
    \vspace{.1cm}
    \caption{Comparison between \cref{thm:ssa} and the sample approximation from \cref{eqn:ssa-sample}. \textbf{Top row:} Neural implicit surface at initialization. \textbf{Bottom row:} implicit surface after 40K iterations of DiffCD on the bunny point cloud marked in blue. In each case, we sample 100 sets of 5K points uniformly from $\Omega$ marked as black dots (left) and show the resulting gradient norm histograms (center) and the mean sample approximations of $\SSAloss^\alpha$, $\pm$ 1 standard deviation for a range of $\alpha$ (right). Dashed lines indicate the actual values of the surface integral and surface area, as computed from a mesh obtained using marching cubes \cite{lewiner2003efficientMarchingCubes}.}%
    \label{fig:ssa-sample}%
\end{figure}
In this section, we investigate the sample approximation from \cref{eqn:ssa-sample}:
\eq{
    \sampleSSAloss^\alpha(\theta)=\frac{|\Omega|}{K}\frac{\alpha}{2}\sum_{i=1}^Ke^{-\alpha|f(\theta, x_i)|}.
}
We have shown in \cref{thm:ssa} that $\sampleSSAloss^\alpha$ approaches a surface integral in the idealized setting of $\alpha=K=\infty$.
Yet, it is natural to ask whether the approximation is valid in practice.
To this end, we perform a numerical experiment, as summarized in \cref{fig:ssa-sample}.
Specifically, we sample multiple point sets (each with 5K points), from which we compute $\sampleSSAloss^\alpha$ for various values of $\alpha$.

The first row of \cref{fig:ssa-sample} shows the sample approximations for a surface before training.
With weights obtained using the initialization scheme from \citet{gropp2020implicit}.
While the initialization ensures that the initial surface is roughly a sphere, the neural field $f(\theta, \argdot)$ is far from an SDF.
In fact, the histogram reveals that most gradient norms are strictly smaller than 1.
In this case, the sample average indeed converges to the surface integral weighted by the inverse gradient; $\int_{\surface_\theta}1/\|g\|\diff S \neq |\surface_\theta|$, as predicted by \cref{thm:ssa}.

The second row of \cref{fig:ssa-sample} shows the same experiment after fitting $f(\theta, \argdot)$ to a point cloud using DiffCD.
In this case, most gradient norms, $\|g\|$, are close to 1 due to the eikonal loss.
As a result, the SDF approximation is valid, and the sample average approaches the true surface area $|\surface_\theta|$.
This confirms that minimizing the sample approximation $\sampleSSAloss$ along with the eikonal loss is equivalent to minimizing the surface area of the implicit surface.

In both cases, we can also note that increasing $\alpha$ leads to a larger sample variance.
This is because any given sample is less likely to land close enough to the surface to contribute a nonzero value to the average. 
On the other hand, when $\alpha$ is too small, the sample average is more or less independent of the surface.
We use $\alpha=100$ for our experiments, as proposed by \citet{sitzmann2019siren}.
We can verify that this value is reasonable by noting that it allows the sample average to converge without having an overly large variance.

\section{A toy example}
\label{sec:toy-example}
We can demonstrate how the SIREN \cite{sitzmann2019siren} loss can lead to disappearing surfaces using the following toy example of a 2D circle. 
Suppose the points $\tilde x_i$ all lie on a circle of unknown radius $r$ in $\Omega=\mathbb{R}^2$.
We then parameterize the neural SDF as $f(\theta, x) = \|x\| - \theta$.
We can confirm that, as $\grad f(\theta, x) = \frac{x}{\|x\|}$, the field $f(\theta, \argdot)$ satisfies the eikonal constraint everywhere except at $x=0$.
Furthermore, the 0-level set is a circle of radius $\theta$, so clearly the true shape is obtained by setting $\theta = r$.

Plugging $f(\theta, \argdot)$ into \cref{eqn:siren-loss}, we obtain a closed-form expression:
\eq{
    \IGRSSAloss^\alpha(\theta) &= \frac{1}{n}\sum_{i=1}^n|\|\tilde x_i\| - \theta|^p + \mu\frac{\alpha}{2}\int_{\mathbb{R}^2}e^{-\alpha|f(\theta, x)|}\diff x
\\  &= |r - \theta|^p + \pi\mu\alpha\Bigg(\int_0^\theta re^{-\alpha(\theta - r)}\diff r + \int_\theta^{\infty} re^{-\alpha(r - \theta)}\diff r\Bigg)
\\  &= |r - \theta|^p + \mu(2\pi\theta + \frac{\pi}{\alpha}e^{-\alpha\theta}).
}
For a large $\alpha$, the loss function tends towards $\SSAloss^\infty(\theta) = |r - \theta|^p + \mu 2\pi\theta$, which is minimized by
\eq{
    &\theta^* = \begin{cases}
        r & \mu < \frac{1}{2\pi}
    \\  0 & \mu > \frac{1}{2\pi}
    \end{cases}, \quad&\text{for $p=1$;}
\\  &\theta^* = \max\{0, r - \pi\mu\}, \;\quad&\text{for $p=2$}.
}
With the mean squared error ($p=2$), the estimated radius $\theta^*$ is always strictly less than the actual radius $r$ and shrinks linearly with the weighing parameter $\mu$ until the surface collapses to a single point at $x = 0$ for $\mu\geq r/\pi$.
In the mean error case ($p=1$), the true radius will be recovered perfectly if $\mu$ is below the threshold $1/2\pi$.
However, the radius will suddenly collapse to a point when $\mu$ goes above the threshold. Although the threshold can be determined in closed form in this example, it will be generally unpredictable and depend on the magnitude of the gradient of the surface area \wrt the surface parameters $\theta$.

\section{Parameter ablation and eikonal weight smoothing}
\label{sec:eikonal_weight}

\begin{figure}[t]
    \vspace{.4cm}
    \hspace{.005\columnwidth}
    \begin{overpic}[width=.99\columnwidth]{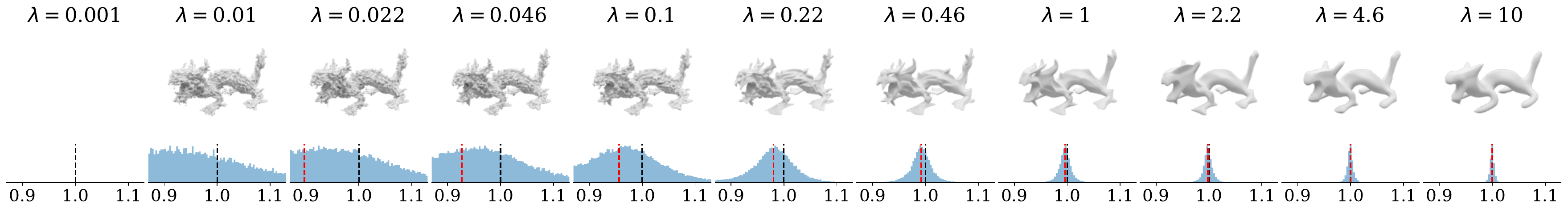}
        \put(50, 15){\rotatebox{0}{\makebox(0,0){\scriptsize{\textbf{Medium Noise}}}}}
        \put(-0.2, 8){\rotatebox{90}{\makebox(0,0){\scalebox{0.5}{Shape}}}}
        \put(-0.2, 3.5){\rotatebox{90}{\makebox(0,0){\scalebox{0.5}{$\|g\|$}}}}
    \end{overpic}

    \vspace{.6cm}
    \hspace{.005\columnwidth}
    \begin{overpic}[width=.99\columnwidth, trim={0 0 0 1cm}, clip]{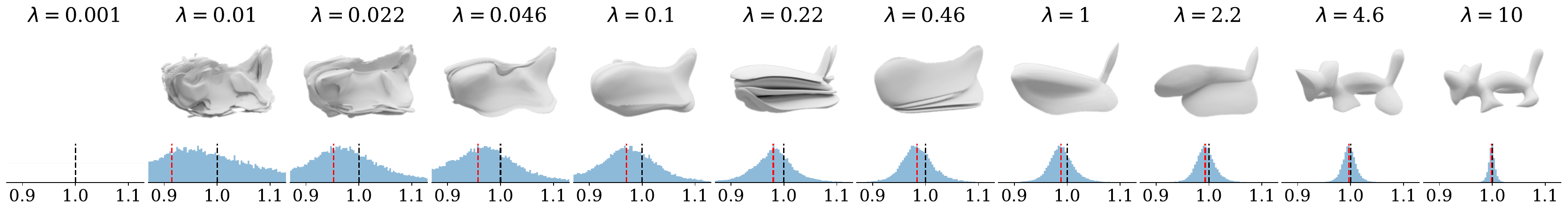}
        \put(50, 14){\rotatebox{0}{\makebox(0,0){\scriptsize{\textbf{Max Noise --} LR$=\!10^{-3}$}}}}
        \put(-0.2, 8){\rotatebox{90}{\makebox(0,0){\scalebox{0.5}{Shape}}}}
        \put(-0.2, 3.5){\rotatebox{90}{\makebox(0,0){\scalebox{0.5}{$\|g\|$}}}}
    \end{overpic}

    \vspace{.6cm}
    \hspace{.005\columnwidth}
    \begin{overpic}[width=.99\columnwidth, trim={0 0 0 1cm}, clip]{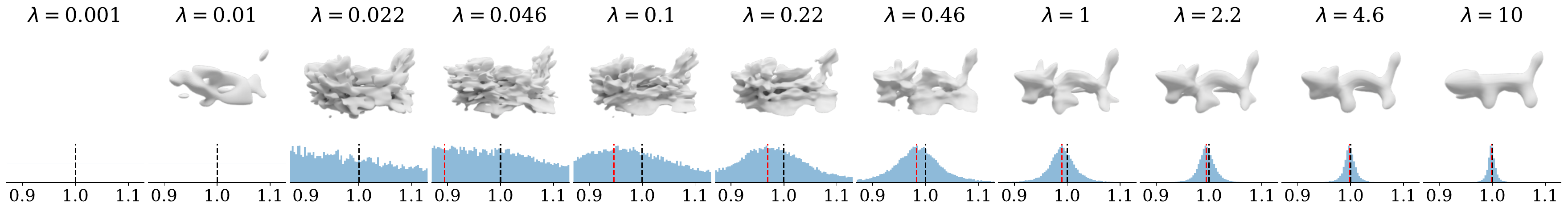}
        \put(50, 14){\rotatebox{0}{\makebox(0,0){\scriptsize{\textbf{Max Noise --} LR$=\!5\!\!\times\!\!10^{-5}$}}}}
        \put(-0.2, 8){\rotatebox{90}{\makebox(0,0){\scalebox{0.5}{Shape}}}}
        \put(-0.2, 3.5){\rotatebox{90}{\makebox(0,0){\scalebox{0.5}{$\|g\|$}}}}
    \end{overpic}
    \vspace{-0.4cm}
    \caption{Reconstructed shapes and gradient norm histograms obtained using DiffCD for various eikonal weights $\lambda$. A larger $\lambda$ leads to smoother surfaces, which prevents overfitting.}
    \label{fig:eikonal-original-xyzrgb_dragon_clean}
\end{figure}

\label{sec:eikonal-smoothing}
\begin{table}[t]
\scriptsize
\sisetup{detect-weight=true,detect-inline-weight=math}
\setlength\tabcolsep{4pt}
\begin{tabularx}{.6\linewidth}{
X
S[table-format=1.1, table-column-width=3.1em]@{\hspace{0.1em}}
S[table-format=1.1, table-column-width=3.1em]@{\hspace{0.1em}}
S[table-format=1.3, table-column-width=3.1em]@{\hspace{0.1em}}
S[table-format=1.3, table-column-width=3.1em]@{\hspace{0.1em}}
S[table-format=2.1,      table-column-width=3.1em]
}
\toprule
        &&
        & \multicolumn{3}{c}{Medium Noise}\\
        \cmidrule(lr){4-6}
  Methods
  & $\lambda$ & LR 
  & {CD} & {CD$^2$}  & {CA ($^\circ$)} 
  \\
\midrule
Neural-Pull & {--} & ${10^{-3}}$
& 0.924 & 4.086 & 41.7 
\\

IGR & 0.1 & ${10^{-3}}$
& 1.454 & 13.185 & 49.8 
\\

SIREN $(\mu\equal 0.033)$ & 0.1 & ${10^{-3}}$
& 0.782 & 1.348 & 45.4 
\\

SIREN $(\mu\equal 0.33)$ & 0.1 & ${10^{-3}}$
& 0.719 & 1.639 & 33.8 
\\

DiffCD (ours) & 0.1 & ${10^{-3}}$
& \textbf{0.654} & \textbf{0.698} & 35.7 
\\
\midrule

IGR & 0.5 & ${10^{-3}}$
& 1.504 & 15.914 & 35.1 
\\

SIREN $(\mu\equal 0.033)$ & 0.5 & ${10^{-3}}$
& 0.963 & 3.647 & 32.5 
\\

SIREN $(\mu\equal 0.33)$ & 0.5 & ${10^{-3}}$
& 0.739 & 1.724 & 26.8 
\\

DiffCD (ours) & 0.5 & ${10^{-3}}$
& 0.690 & 1.069 & \textbf{26.6} 
\\

\bottomrule
\end{tabularx}
\hspace{.3cm}
\newcommand{\figwidth}{.15\columnwidth}
\begin{tabular}{cc}
    \vspace{1cm}&\\
    \begin{overpic}[width=\figwidth]{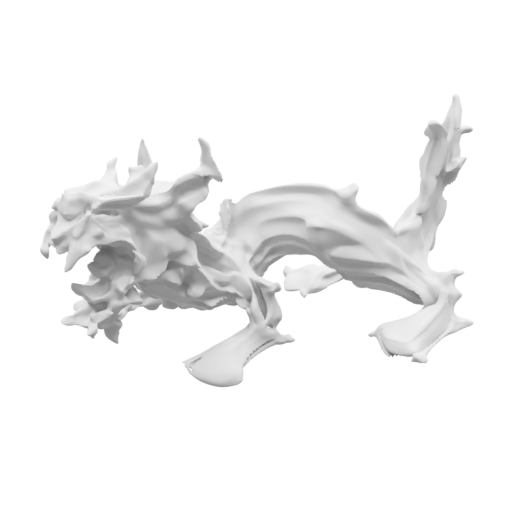}
        \put(50, 90){\makebox(0,0){
            \scriptsize{SIREN \cite{sitzmann2019siren}}
        }}
        \put(-15, 50){\makebox(0,0){\rotatebox{90}{
            \scriptsize{$\lambda\equal0.1$}
        }}}
    \end{overpic}& 
    \rotatebox{0}{\begin{overpic}[width=\figwidth]{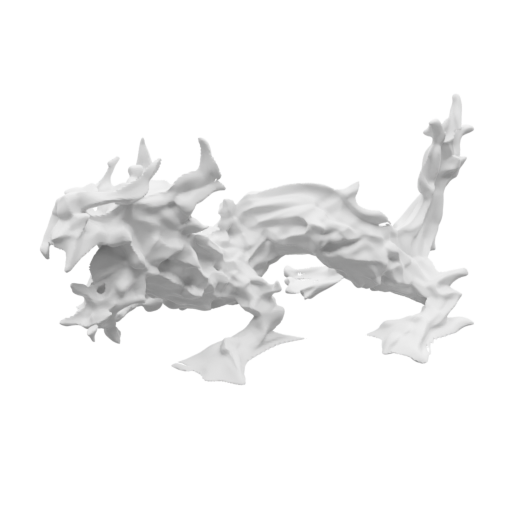}
        \put(50, 90){\makebox(0,0){
            \scriptsize{DiffCD (ours)}
        }}
    \end{overpic}}
    \vspace{-0.3cm}\\
    \begin{overpic}[width=\figwidth]{figures/original_siren/xyzrgb_dragon_clean.png}
        \put(-15, 50){\makebox(0,0){\rotatebox{90}{
            \scriptsize{$\lambda\equal0.5$}
        }}}
    \end{overpic}&
    \begin{overpic}[width=\figwidth]{figures/original_diffcd/xyzrgb_dragon_clean.png}
    \end{overpic}
\end{tabular}  

\caption{\textbf{Parameter ablation on FAMOUS in the medium noise setting.} Lower is better. The best performing optimization-based method in each category is marked in bold.} 
\label{table:ablation-medium}
\vspace{-0.5cm}
\end{table}

\begin{table}[t]
\scriptsize
\sisetup{detect-weight=true,detect-inline-weight=math}
\setlength\tabcolsep{4pt}

\sisetup{detect-weight,mode=text}
\renewrobustcmd{\bfseries}{\fontseries{b}\selectfont}
\renewrobustcmd{\boldmath}{}
\newrobustcmd{\B}{\bfseries}

\begin{tabularx}{.6\linewidth}{
        X
        S[table-format=1.1, table-column-width=3.1em]@{\hspace{0.1em}}
        S[table-format=1.1, table-column-width=3.1em]@{\hspace{0.1em}}
        S[table-format=1.3, table-column-width=3.1em]@{\hspace{0.1em}}
        S[table-format=3.3, table-column-width=3.1em]@{\hspace{0.1em}}
        S[table-format=2.1,      table-column-width=3.1em]
    }
    \toprule
            &&
            & \multicolumn{3}{c}{Max Noise}\\
            \cmidrule(lr){4-6}
      Methods
      & $\lambda$ & LR 
      & {CD} & {CD$^2$}  & {CA ($^\circ$)} 
      \\
    \midrule
    Neural-Pull & {--} & ${10^{-3}}$
    & 1.580 & 5.101 & 66.9 
    \\
    
    IGR & 0.1 & ${10^{-3}}$
    & 8.613 & 281.441 & 87.1 
    \\
    
    SIREN $(\mu\equal 0.033)$  & 0.1 & ${10^{-3}}$
    & 2.828 & 16.140 & 85.9 
    \\
    
    SIREN $(\mu\equal 0.33)$  & 0.1 & ${10^{-3}}$
    & 1.845 & 5.936 & 79.2 
    \\
    
    DiffCD (ours)  & 0.1 & ${10^{-3}}$
    & 2.042 & 7.888 & 84.3 
    \\
    \midrule
    
    Neural-Pull & {--} & ${5\!\!\times\!\!10^{-5}}$
    & 1.926 & 11.114 & 44.9 
    \\
    
    IGR  & 1.0 & ${5\!\!\times\!\!10^{-5}}$
    & 2.553 & 27.912 & 45.7 
    \\
    
    SIREN $(\mu\equal 0.033)$ & 1.0 & ${5\!\!\times\!\!10^{-5}}$
    & 2.079 & 14.268 & 44.9 
    \\
    
    SIREN $(\mu\equal 0.33)$ & 1.0 & ${5\!\!\times\!\!10^{-5}}$
    & 1.658 & 7.698 & 41.1 
    \\
    
    DiffCD (ours) & 1.0 & ${5\!\!\times\!\!10^{-5}}$
    & \textbf{1.416} & \B 4.306 & \textbf{37.4} 
    \\
    
    \bottomrule
\end{tabularx}
\hspace{.3cm}
\newcommand{\figwidth}{.15\columnwidth}
\begin{tabular}{cc}
    \vspace{1cm}&\\
    \begin{overpic}[width=\figwidth]{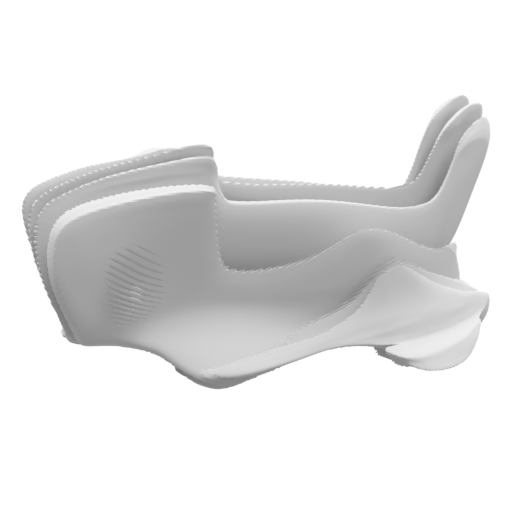}
        \put(50, 90){\makebox(0,0){
            \scriptsize{SIREN \cite{sitzmann2019siren}}
        }}
        \put(-35, 15){\rotatebox{90}{
            \Centerstack[l]{\scriptsize{$\lambda\equal0.1$}\\LR$=10^{\text{-}3}$}
        }}
    \end{overpic}& 
    \rotatebox{0}{\begin{overpic}[width=\figwidth]{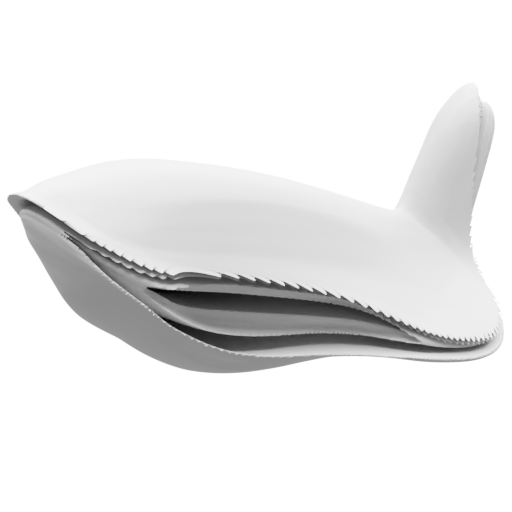}
        \put(50, 90){\makebox(0,0){
            \scriptsize{DiffCD (ours)}
        }}
    \end{overpic}}
    \vspace{-0.1cm}\\
    \begin{overpic}[width=\figwidth]{figures/extra_noisy_siren/xyzrgb_dragon_clean.png}
        \put(-35, 15){\rotatebox{90}{
             \Centerstack[l]{\scriptsize{$\lambda\equal1$}\\LR$=5\!\!\times\!\!10^{\text{-}5}$}
        }}
    \end{overpic}&
    \begin{overpic}[width=\figwidth]{figures/extra_noisy_diffcd/xyzrgb_dragon_clean.png}
    \end{overpic}
\end{tabular}  

\caption{\textbf{Parameter ablation on FAMOUS in the max noise setting.} Lower is better. The best performing optimization-based method in each category is marked in bold.} 
\label{tab:ablation-max}
\vspace{-0.5cm}
\end{table}
We find that the value of the eikonal weight, $\lambda$, has a significant impact on the overall smoothness of reconstructed shapes.
In particular, larger $\lambda$ tends to produce smoother shapes.
This is likely due to the fact that non-flat surfaces have more non-differentiable points in the distance field, where the gradient changes direction abruptly.
Such points are difficult to approximate without violating the eikonal constraint, and they are therefore naturally avoided by the optimization process if $\lambda$ is sufficiently large.
We demonstrate this effect qualitatively in \cref{fig:eikonal-original-xyzrgb_dragon_clean} the noisy scenarios.
When $\lambda$ is too small, the field collapses to the degenerate solution $f(\theta, \argdot) = 0$.
As $\lambda$ increases, a distinct 0-level set eventually appears.
However, the reconstructed shapes are initially ``wrinkly'', since nothing prevents the surface from overfitting to the noise.
With larger $\lambda$, the surfaces become smoother, and the gradient norms concentrate around the value 1.
On the other hand, setting $\lambda$ too large results in over-smoothing, so the optimal value for any given shape will lie somewhere in between.
We can also note that, in contrast to $\IGRSSAloss$, increasing $\lambda$ does not lead to the surface disappearing. 

We find that using the base settings of $\lambda\equal 0.1$ and a learning rate (LR) of $10^{-3}$ works well for the noise-free setting.
However, in the ``medium noise'' and ``max noise'' settings, we observe that these parameter settings result in an under-regularized problem, and the surface tends to overfit the noise.

\inparagraph{Medium noise.} In the medium noise setting, we observe that optimization-based methods tend to overfit to the point cloud by introducing ``wrinkles'' in the surface such that it passes through all the points.
We demonstrate this phenomenon quantitatively and qualitatively in \cref{table:ablation-medium}.
The overfitting is particularly noticeable in the Chamfer angle (CA) metric, where all methods have a CA of above 35$^\circ$ when using $\lambda\equal 0.1$.
In order to reduce the degree of overfitting, we increase the eikonal weight to $\lambda\equal0.5$.
The larger eikonal weight results in smoother surfaces across all methods, with a substantial decrease in the CA metric, although at the cost of a slight increase in CD and CD$^2$.

\inparagraph{Max noise.} In the maximum noise setting, we observe a different kind of overfitting.
Namely, the surface tends to ``fold'' into several layers.
This folding does in fact result in a lower Chamfer distance (both symmetric and one-sided) between the implicit surface and the point cloud.
Therefore, mitigating this phenomenon again requires regularization.
In this case, we set $\lambda = 1$ and LR$=5\!\!\times\!\!10^{-5}$.
We verify in \cref{tab:ablation-max} that these settings yield better results across all methods.

\section{Comparison to Neural-Pull}
\label{sec:neural-pull}
\begin{figure}[t]
    \vspace{1cm}
    \newcommand{\figwidth}{.24\columnwidth}
    \centering
    \begin{tabular}{cccc}
        \begin{overpic}[width=\figwidth]{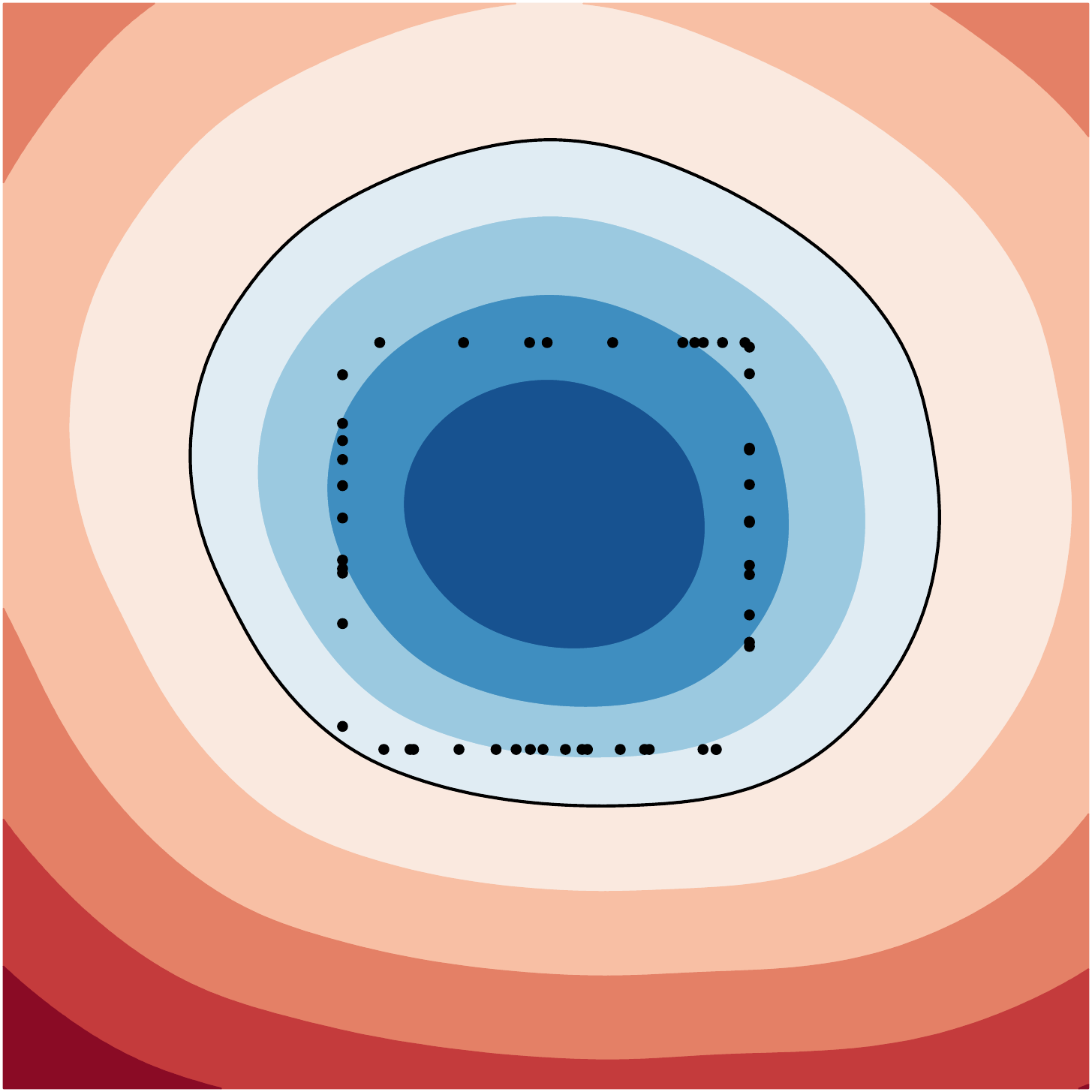}
            \put(50, 105){\makebox(0,0){\rotatebox{0}{
                \scriptsize{\textbf{Initialization}}
            }}}
        \end{overpic}&
        \begin{overpic}[width=\figwidth]{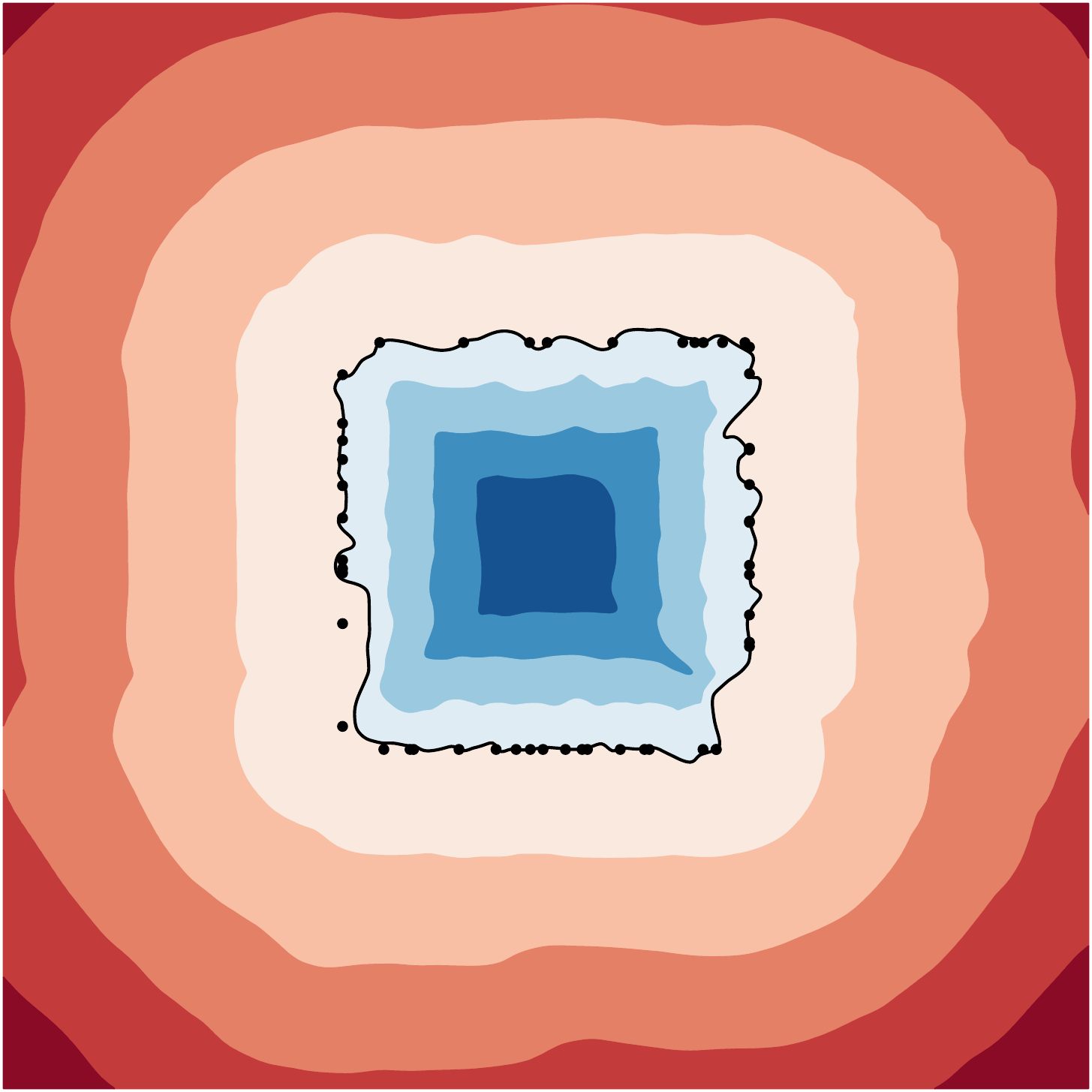}
            \put(50, 105){\makebox(0,0){\rotatebox{0}{
                \scriptsize{\textbf{\neupull \cite{Ma2020NeuralPull}}}
            }}}
        \end{overpic}&
            \begin{overpic}[width=\figwidth]{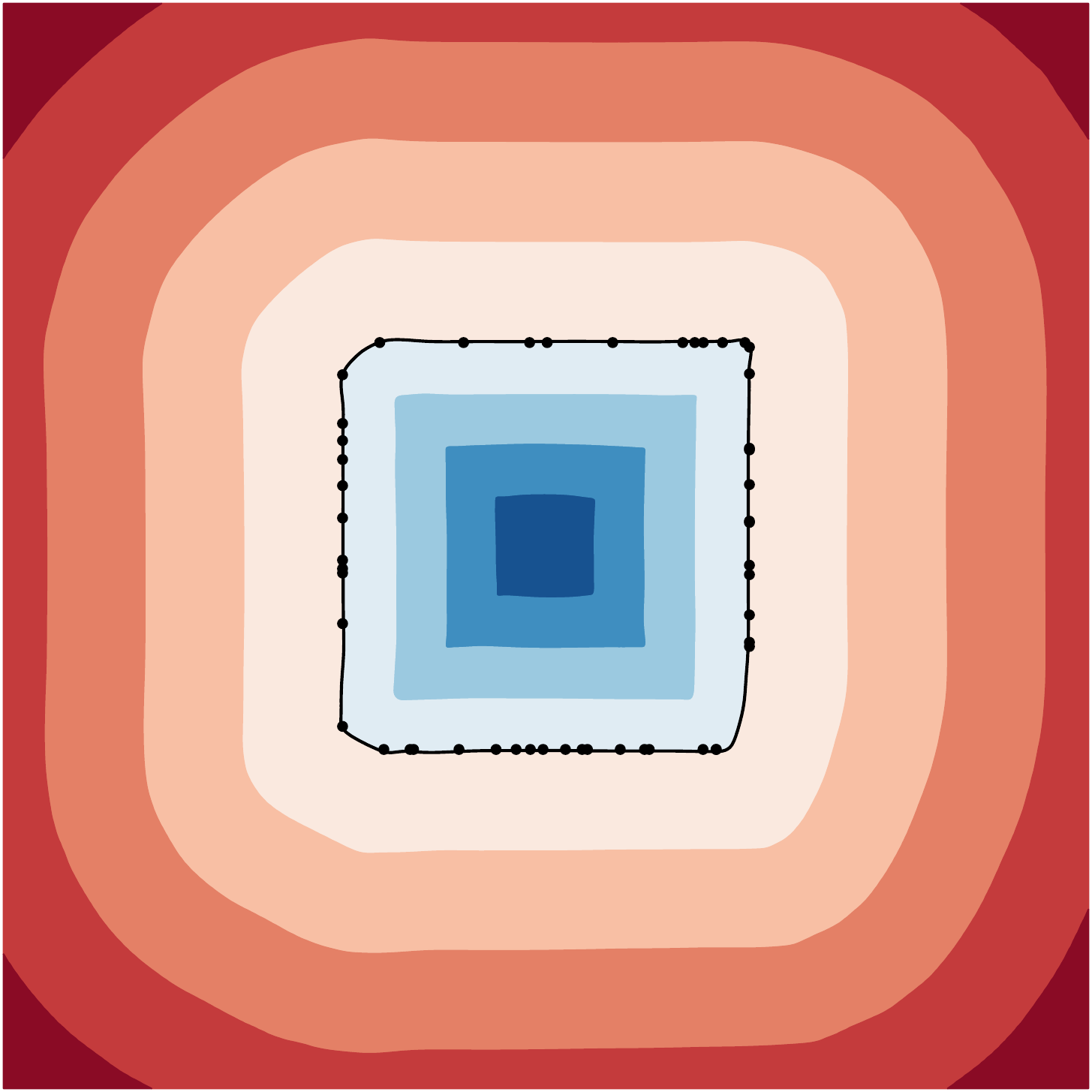}
            \put(50, 105){\makebox(0,0){\rotatebox{0}{
                \scriptsize{\textbf{IGR \cite{gropp2020implicit}}}
            }}}
        \end{overpic}&
        \begin{overpic}[width=\figwidth]{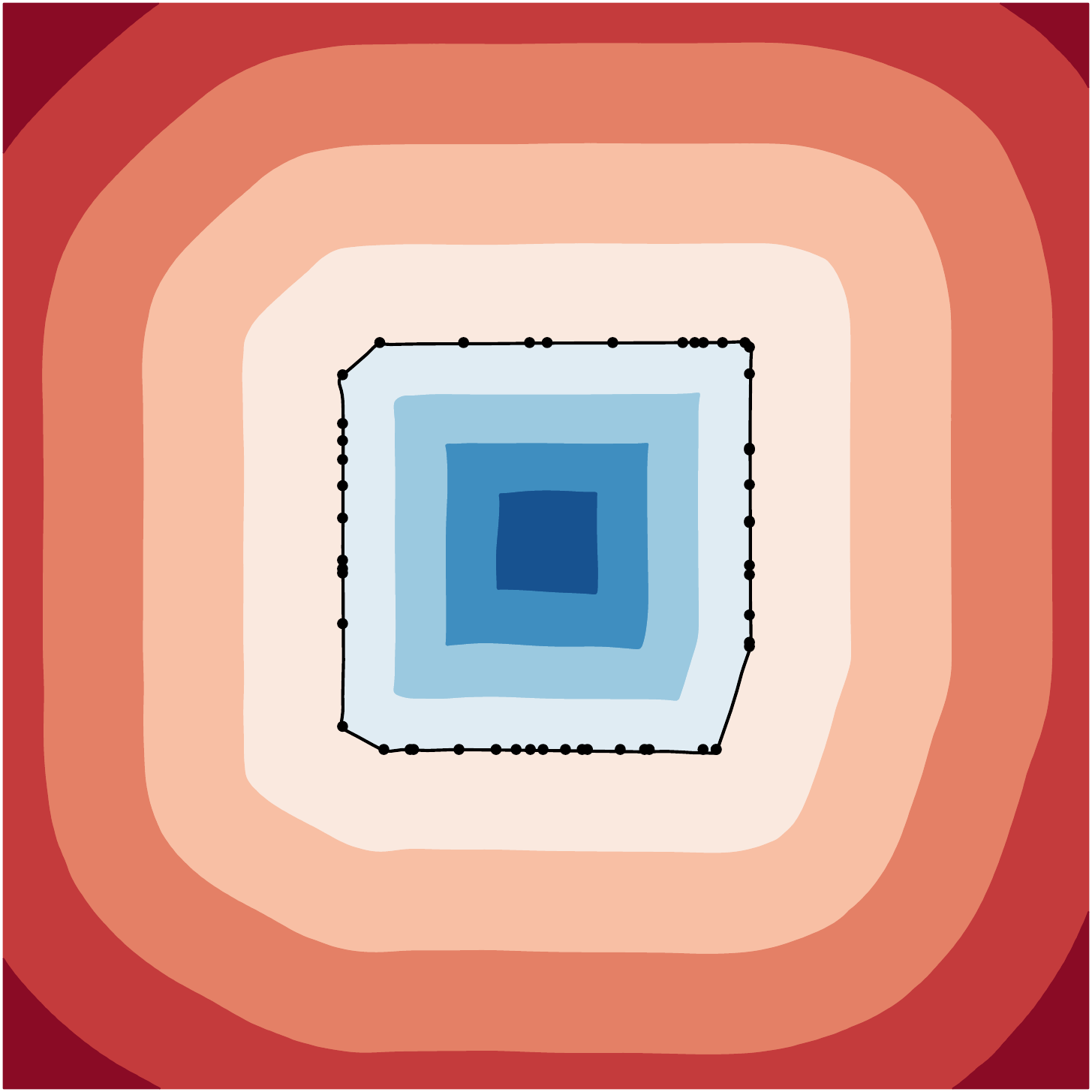}
            \put(50, 105){\makebox(0,0){\rotatebox{0}{
                \scriptsize{\textbf{DiffCD (ours)}}
            }}}
        \end{overpic}
    \end{tabular}
    \vspace{-0.4cm}
    \caption{Comparison between Neural-Pull and other optimization-based methods for a sparse 2D point cloud. \neupull \cite{Ma2020NeuralPull} tends to pull the surface apart, causing wrinkles between training points. In contrast, methods based on the implicit points-to-surface distance (\cf \cref{eqn:igr-loss}) tend to smoothly interpolate the points.}
    \label{fig:sparse-box}
\end{figure}

\neupull \cite{Ma2020NeuralPull} tackles the surface reconstruction problem by defining a pulling operation:
\eq{
    \label{eqn:pull-operation}
    \pull{\theta}{x} = x - f(\theta, x)\frac{\grad f(\theta, x)}{\|\grad f(\theta, x)\|}
}
which ``pulls'' the point $x$ a distance of $|f(\theta, x)|$ along the direction of the gradient.
The pulled point will be close to the implicit surface $S_\theta$ as long as $f(\theta, x)$ approximates an SDF.
The idea behind \neupull is then to sample a set of points, $\globalsample\in\globalsamples$, and minimize the average distance\footnote{While Eq. 2 in \citet{Ma2020NeuralPull} denotes the mean \textit{squared} distance between the point correspondences, the released implementation uses the mean distance. We opt for the mean distance to match the implementation.} between the pulled location of each sample point and the corresponding closest point in the point cloud $\pointcloud$:
\eq{
\label{eqn:neural-pull-loss}
    \NPloss^p(\theta) = 
        \frac{1}{|\globalsamples|}\sum_{\globalsample\in \globalsamples}\|\closest{\pointcloud}{\globalsample} - \pull{\theta}{\globalsample}\|.
}
Here, $\closest{A}{\globalsample}$ denotes the closest point to $\globalsample$ in the set $A$:
\eq{
    \closest{A}{\globalsample} = \min_{x\in A}\|x - \globalsample\|.
}
With this definition, we have $\pull{\theta}{\globalsample}\equal\closest{\surface_\theta}{\globalsample}$ in the case when $f(\theta, \argdot)$ is an SDF.
We can therefore view minimizing \cref{eqn:neural-pull-loss} as a method for sampling correspondences between the point cloud and the implicit surface, and then minimizing the resulting distances.\footnote{\cref{eqn:neural-pull-loss} cannot be directly interpreted as a Chamfer distance, as $\closest{\surface_\theta}{\globalsample}$ and $\closest{\pointcloud}{\globalsample}$ are not necessarily the closest points \wrt each other.}
This approach works well in cases where $\closest{\pointcloud}{x}$ is a good approximation of the closest point on the true surface $\surface$, as is the case when $\pointcloud$ densely samples $\surface$.
However, it introduces problems when the point cloud is noisy or sparse -- the scenario we addressed in this work.

We illustrate the drawbacks of \neupull for a sparse 2D point cloud in \cref{fig:sparse-box}.
In the sparse scenario, the sample points $\globalsample$ are more likely to end up \textit{in-between} the training points, resulting in the surface being pulled apart and introducing ``wrinkles''.
IGR \cite{gropp2020implicit} and DiffCD, which are based on the implicit points-to-surface loss (\cf \cref{eqn:igr-loss}), tend to instead smoothly interpolate between surface samples.

We note that our surface-to-points loss used in DiffCD also ``pulls'' the surface towards the point cloud.
However, the choice of the minimum norm gradient in \cref{sec:diffcd} ensures that the pulling stops when the direction towards the closest point in the point cloud is orthogonal to the surface normal. Ensuring that a smooth surface is obtained even for sparse point clouds.



\end{document}